\documentclass{article}

\usepackage[preprint]{neurips_2026}

\usepackage{amsmath,amssymb,amsthm}
\usepackage{graphicx}
\usepackage{booktabs}
\usepackage{hyperref}
\usepackage{xcolor}
\usepackage{enumitem}
\usepackage{caption}
\usepackage{subcaption}
\usepackage{float}

\hypersetup{colorlinks=true,linkcolor=blue!60!black,citecolor=blue!60!black,urlcolor=blue!60!black}

\theoremstyle{plain}
\newtheorem{theorem}{Theorem}
\newtheorem{corollary}[theorem]{Corollary}

\theoremstyle{definition}
\newtheorem{definition}{Definition}
\newtheorem{assumption}{Assumption}
\theoremstyle{remark}
\newtheorem{remark}{Remark}

\newcommand{\E}{\mathbb{E}}

\newcommand{\Prob}{\mathbb{P}}
\newcommand{\KL}{\mathrm{KL}}
\newcommand{\SNR}{\mathrm{SNR}}

\title{Convergence Theory for Iterative LLM-Based Neural
Architecture Search: A Parametric Cross-Entropy Framework
with Closed-Form Proxy Reliability}

\author{%
  Santosh Premi Adhikari \quad Radu Timofte \quad Dmitry Ignatov \\
   	\small{Computer Vision Lab, CAIDAS \& IFI, University of W\"urzburg, Germany}}

\begin{document}
\maketitle

\begin{abstract}
Large language models (LLMs) are increasingly used as generators in iterative neural architecture search (NAS), yet no formal convergence theory exists for this class of algorithms. We model iterative LLM-NAS as a parametric Cross-Entropy (CE) method over executable programs and prove six results: (1)~iterative LLM fine-tuning on elite architectures is equivalent to the CE update restricted to the LLM parametric family; (2)~expected architecture quality is monotonically non-decreasing across cycles; (3)~elite-set probability converges to a fixed point at a geometric rate $C_t \geq 1-(1-\rho_0)^t$; (4)~delta-based generation achieves a strictly higher valid-generation rate than full-code generation under a first-order Markov token-error model, replacing the independent-token assumption of prior work; (5)~the MinHash-Jaccard novelty filter prevents mode collapse; (6)~proxy reliability admits the closed-form $\rho_{\mathrm{S}} = \tfrac{6}{\pi}\arcsin\bigl(\rho_{\mathrm{P}}(\SNR)/2\bigr)$, yielding the practical diagnostic $\sigma^{2}_{\mathrm{arch}} \gg \sigma^{2}_{\mathrm{noise}}$ as a necessary condition for trustworthy proxy-based rankings. Testing against a 22-cycle, three-LLM, six-dataset experiment with 3,300 generated architectures: two predictions are consistent quantitatively (monotone quality trend, persistent novel admissions), two at direction-of-effect level (elite-concentration plateau converging to a LoRA ceiling; DeepSeek non-significance under low SNR), and two depart from calibrated values (delta-rate ratio 1.41 vs.\ predicted 2.23; Qwen Spearman 0.635 vs.\ predicted 0.10 due to ceiling truncation). Proxy-reliability validation confirms the predicted LLM ordering (Mistral, Qwen, DeepSeek) with a coherent ordinal pattern matching the SNR ranking, Mistral the sole result surviving Bonferroni correction ($p{=}7.7\times 10^{-7}$), and explains the proxy-reliability ceiling effect previously reported empirically but left unexplained.
\end{abstract}

\section{Introduction}
\label{sec:intro}

Neural Architecture Search (NAS) has evolved from early reinforcement-learning
controllers operating within predefined cell topologies~\citep{zoph2017,real2019} to
gradient-based differentiable search~\citep{liu2019darts} and one-shot super-network
methods~\citep{pham2018enas,bender2018,guo2020singlepath}. A recent paradigm shift
replaces hand-crafted search spaces entirely, using large language models as open-ended
generators of executable \texttt{torch.nn.Module} programs. This approach was pioneered
by \citet{khalid2026} and \citet{gu2026}, who showed empirically that iteratively
fine-tuning an LLM on its own successful generations progressively improves both
generation reliability and architecture quality. \citet{adhikari2026delta} further
demonstrated that generating unified \emph{diffs} (deltas) rather than complete
implementations reduces output length by 75--85\% while improving the valid-generation
rate from 50.6\% to 66--75\%.

Despite these empirical successes, fundamental theoretical questions remain unanswered.
\textit{Does iterative LLM fine-tuning provably improve architecture quality? Under what
conditions does the process converge? Why does delta generation achieve higher valid
generation rates? What role does novelty filtering play in preventing degenerate
solutions?} We address all four questions by establishing a formal mathematical
framework. Our main contribution is to show that iterative LLM-NAS is a parametric
instance of the Cross-Entropy (CE) method~\citep{rubinstein1999}---a well-studied
optimisation algorithm with known convergence properties---and to derive consequences
specific to the LLM-NAS setting.

The paper is organised as follows. Section~\ref{sec:related} surveys related work.
Section~\ref{sec:framework} presents the mathematical framework.
Section~\ref{sec:theory} states and proves the five main convergence theorems.
Section~\ref{sec:proxy} develops proxy-reliability theory.
Section~\ref{sec:experiments} presents an illustrative case study.
Section~\ref{sec:discussion} discusses limitations and implications.

\section{Related Work}
\label{sec:related}

\paragraph{Neural Architecture Search.}
NAS methods span three broad families. \emph{Reinforcement-learning} methods
\citep{zoph2017,real2019,tan2019efficientnet} use a controller to propose architectures
evaluated on a proxy task, but are computationally prohibitive. \emph{One-shot}
methods~\citep{pham2018enas,bender2018,guo2020singlepath,white2023nas1000} train a
weight-sharing super-network and inherit weights for sub-architectures.
\emph{Differentiable} methods such as DARTS~\citep{liu2019darts} relax the
discrete search space to a continuous one and apply gradient descent. All three operate
within predefined cell or operation search spaces; LLM-based NAS is distinguished by
its use of open-ended program synthesis.

\paragraph{LLMs as Optimisers.}
\citet{yang2023opro} showed that LLMs can serve as optimisers by iteratively refining
candidate solutions expressed in natural language. \citet{madaan2023selfrefine}
introduced Self-Refine, a feedback-driven iterative prompting strategy. \citet{romera2024funsearch} used LLMs inside an evolutionary loop to discover novel
mathematical functions. The present work provides the first \emph{convergence
guarantees} for such iterative LLM-based search procedures.

\paragraph{Cross-Entropy Method and Estimation of Distribution Algorithms.}
The CE method was introduced by \citet{rubinstein1999} for rare-event simulation and
extended to combinatorial optimisation by \citet{deboer2005}. Its application to
reinforcement learning was demonstrated by \citet{szita2006}. Estimation of Distribution
Algorithms (EDAs)~\citep{larranaga2002,pelikan1999,muhlenbein1996} share the same
update structure: fit a parametric distribution to elite samples and resample.
Convergence of EDAs to global optima is established under conditions including
finite search spaces and sufficient model expressiveness~\citep{shapiro2006}.
Our work establishes an analogous convergence result for the LLM parametric family,
subject to explicit assumptions that we discuss in detail.

\paragraph{LoRA and Parameter-Efficient Fine-Tuning.}
\citet{hu2022lora} introduced LoRA, which restricts fine-tuning updates to a low-rank
subspace of the weight matrices. While LoRA dramatically reduces computational cost,
it introduces a model-expressiveness constraint that is central to our analysis. We
explicitly characterise the projection error introduced by this restriction and treat it
as an open problem for future work, rather than absorbing it silently into the theory.

\section{Mathematical Framework}
\label{sec:framework}

\subsection{Architecture Space and Quality Function}

\begin{definition}[Architecture Space]
Let $\mathcal{A}$ denote the set of all syntactically valid, executable Python programs
that define a \texttt{torch.nn.Module} subclass. For delta generation with baseline
$b \in \mathcal{A}$, define $\mathcal{A}_\delta(b) \subseteq \mathcal{A}$ as the set
of all architectures obtainable by applying a valid unified diff to $b$.
\end{definition}

\begin{definition}[Quality Function]
\label{def:quality}
A quality function $q:\mathcal{A}\to[0,1]$ maps each architecture to its
first-epoch validation accuracy on a fixed dataset. The elite set at
threshold $\tau\in(0,1)$ is
$\mathcal{E}_\tau=\{a\in\mathcal{A}:q(a)\ge\tau\}$.
The choice of $1$~epoch as the proxy length follows the convention of
\citet{khalid2026,adhikari2026delta}; varying this parameter would rescale
both $\sigma^2_{\mathrm{arch}}$ and $\sigma^2_{\mathrm{noise}}$ in
Section~\ref{sec:proxy}, leaving the $\SNR$-driven predictions of
Theorem~\ref{thm:proxy} qualitatively unchanged but changing absolute
correlation magnitudes. Section~\ref{sec:proxy-validation} reports the
actual SNR values that result from this convention.
\end{definition}

\subsection{The Iterative LLM-NAS Process}

Let $\Theta$ be the parameter space of a pre-trained LLM with LoRA
adaptation~\citep{hu2022lora}, and let $p_\theta: \mathcal{A} \to [0,1]$ denote the
generation distribution induced by parameters $\theta$. The parametric family is
$\mathcal{F} = \{p_\theta: \theta \in \Theta\}$.

\begin{definition}[Iterative LLM-NAS]
\label{def:nas}
Given initial parameter $\theta_0$ and static corpus $\mathcal{L}$ (the LEMUR
dataset~\citep{goodarzi2025lemur,uzun2026lemur2} of $\sim$~626 hand-curated, peer-reviewed
\texttt{torch.nn.Module} architectures across image-classification, segmentation
and regression tasks; in the experiments of \citet{adhikari2026delta} this
corpus is restricted to its image-classification subset which serves as the
warm-start prior), the process proceeds for cycles $t = 0, 1, \ldots, T$:
\begin{enumerate}[leftmargin=*,noitemsep]
  \item \textbf{Generate:} Sample $N$ candidates $a_1, \ldots, a_N \sim p_{\theta_t}$.
  \item \textbf{Evaluate:} Compute $q(a_i)$ for each $i$.
  \item \textbf{Filter:} Form $\mathcal{A}^+_t = \{a_i: q(a_i) \ge \tau\} \cap
        \mathrm{Novel}_t$, where $\mathrm{Novel}_t$ denotes architectures passing the
        MinHash--Jaccard novelty filter against the current corpus.
  \item \textbf{Update corpus:} $S_t = S_{t-1} \cup \mathcal{A}^+_t$.
  \item \textbf{Fine-tune:} $\theta_{t+1} = \arg\min_{\theta \in \Theta}
        -\sum_{a \in S_t \cup \mathcal{L}} \log p_\theta(a)$.
\end{enumerate}
\end{definition}

\subsection{CE Method Background}

The CE method for combinatorial optimisation~\citep{deboer2005} maintains a
distribution $p_t$ over solutions and iterates:
\[
  p_{t+1} = \arg\min_{p \in \mathcal{F}} \KL\!\left(\hat{p}^{\tau\text{-elite}}_t \,\|\, p\right),
\]
where $\KL(\cdot\,\|\,\cdot)$ denotes the Kullback--Leibler
divergence~\citep{kullback1951} and $\hat{p}^{\tau\text{-elite}}_t$ is the
empirical distribution of the top-$\tau$ solutions sampled from $p_t$. When
$\mathcal{F}$ is the set of all distributions, this reduces to
maximum-likelihood estimation on the elite set.

\section{Main Theoretical Results}
\label{sec:theory}

\subsection{Equivalence to the Parametric CE Method}

\begin{theorem}[CE Equivalence]
\label{thm:equiv}
The fine-tuning step of Definition~\ref{def:nas} is equivalent to the parametric CE
update restricted to $\mathcal{F}$:
\[
  \theta_{t+1} = \arg\min_{\theta \in \Theta} \KL\!\left(\hat{p}_{S_t} \,\|\, p_\theta\right),
\]
where $\hat{p}_{S_t}$ is the uniform empirical distribution over $S_t \cup \mathcal{L}$.
\end{theorem}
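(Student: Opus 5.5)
The plan is to expand the KL divergence directly and show that its $\theta$-dependence is exactly the fine-tuning objective up to a positive scale factor and an additive constant. Set $M = |S_t \cup \mathcal{L}|$, treating $S_t \cup \mathcal{L}$ as a finite multiset so that duplicates keep their multiplicity in both objectives. The empirical distribution is $\hat{p}_{S_t}(a) = m(a)/M$, where $m(a)$ is the multiplicity of $a$.

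By definition,
\[
\KL\!\left(\hat{p}_{S_t}\,\|\,p_\theta\right) = \sum_{a} \hat{p}_{S_t}(a)\log \hat{p}_{S_t}(a) \;-\; \sum_a \hat{p}_{S_t}(a)\log p_\theta(a).
\]
The first term is minus the entropy of $\hat{p}_{S_t}$. It does not depend on $\theta$, and it is finite because the support is finite. The second term equals $-\frac{1}{M}\sum_{a\in S_t\cup\mathcal{L}}\log p_\theta(a)$.

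Hence $\KL(\hat{p}_{S_t}\,\|\,p_\theta) = \frac{1}{M}\,\mathcal{L}_{\mathrm{NLL}}(\theta) - H(\hat{p}_{S_t})$, where $\mathcal{L}_{\mathrm{NLL}}$ is the negative log-likelihood in step~5 of Definition~\ref{def:nas}. A strictly increasing affine map of the objective preserves the set of minimisers. Therefore the argmin sets over $\Theta$ coincide, and the CE-restricted update and the fine-tuning step select the same $\theta_{t+1}$. I will also note that when $\mathcal{F}$ contains all distributions, the minimiser is $\hat{p}_{S_t}$ itself, which recovers the unrestricted CE form recalled in the CE background subsection.

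The main obstacle is interpretive rather than computational, and I will address it in the proof. First, $p_\theta(a)$ must be read as the sequence probability $\prod_k p_\theta(x_k\mid x_{<k})$ over the tokenisation of $a$, so that the token-level cross-entropy used in practice equals $-\log p_\theta(a)$. Second, $p_\theta(a)$ may be $0$ for some $a$, making both objectives $+\infty$; this does not change their equivalence. Third, the argmin may be set-valued or unattained. I will state the equivalence as equality of argmin sets, so it holds whenever either side is attained.

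Finally, I will remark on two gaps between the idealised equivalence and the actual algorithm. The elite set here is the cumulative, novelty-filtered corpus mixed with $\mathcal{L}$ rather than a fresh per-iteration elite sample. In addition, LoRA training with SGD only approximates the argmin. Neither point affects the identity itself.
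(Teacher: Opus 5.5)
Your proposal is correct and takes the same route as the paper: expand $\KL(\hat{p}_{S_t}\,\|\,p_\theta)$, note that the entropy term does not depend on $\theta$, and identify the remaining cross-term with the fine-tuning loss so that the two argmin sets coincide. Your bookkeeping is slightly cleaner than the paper's, which writes the constant as $+H(\hat{p}_{S_t})$ instead of $-H(\hat{p}_{S_t})$ and tacitly divides the sum in step~5 by $|S_t\cup\mathcal{L}|$; neither slip affects the minimiser, as your positive-affine-map argument makes explicit.
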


\begin{proof}[Proof sketch]
The cross-entropy loss $\mathcal{L}(\theta)=-\frac{1}{|S_t\cup\mathcal{L}|}\sum_{a}\log p_\theta(a)$ differs from $\KL(\hat{p}_{S_t}\|p_\theta)$ only by the constant $H(\hat{p}_{S_t})$; minimising either over $\Theta$ yields the same optimum. Full proof in Appendix~\ref{app:proofs}.
\end{proof}

\subsection{Monotone Improvement of Expected Quality}

We state two explicit assumptions that govern the two regimes of the proof.

\begin{assumption}[Regularity]
\label{ass:regularity}
(a)~Bounded quality: $q:\mathcal{A}\to[0,1]$.
(b)~Positive initial mass: $\Prob_{a\sim p_{\theta_0}}(a\in\mathcal{E}_\tau) = \rho_0 > 0$.
(c)~Consistent estimation: as $N\to\infty$, $\hat{p}_{S_t}\to p^{\mathrm{elite}}_{\theta_t}$
in distribution, where $p^{\mathrm{elite}}_{\theta_t}(a)\propto p_{\theta_t}(a)\cdot
\mathbf{1}[a\in\mathcal{E}_\tau]$.
(d)~Parametric expressiveness: $\mathcal{F}$ contains a distribution that
assigns strictly positive probability to every element of the (finite)
elite set $\mathcal{E}_\tau$. Part~(d) is required only for
Theorem~\ref{thm:conv} Part~3 and Corollary~\ref{cor:rate}; it is generically
violated for LoRA-adapted LLMs and we discuss its consequences in
Remark~\ref{rem:lora-ceiling}.
\end{assumption}

\begin{assumption}[Quality-Monotone Projection]
\label{ass:qmp}
Let $\mu_p = \E_{a\sim p}[q(a)]$ denote the mean quality under distribution $p$.
The CE projection $\Pi_{\mathcal{F}}: p \mapsto \arg\min_{p'\in\mathcal{F}}
\KL(p \| p')$ is \emph{quality-monotone}: for any $p$ with $\mu_p \ge \mu_{p_{\theta_t}}$,
we have $\mu_{\Pi_{\mathcal{F}}(p)} \ge \mu_{p_{\theta_t}}$.
\end{assumption}

\begin{theorem}[Monotone Quality Improvement]
\label{thm:quality}
Under Assumptions~\ref{ass:regularity} and~\ref{ass:qmp}, the mean quality
$Q_t = \E_{a\sim p_{\theta_t}}[q(a)]$ satisfies $Q_{t+1} \ge Q_t - \varepsilon_t$,
where $\varepsilon_t = O(N^{-1/2})$ is the finite-sample estimation error that vanishes
as $N\to\infty$.
\end{theorem}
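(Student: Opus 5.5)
The argument splits the update into an idealised population step and a finite-sample perturbation, and then controls each part separately.

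\emph{Step 1 (population target).} Let $p^{\mathrm{elite}}_{\theta_t}(a)\propto p_{\theta_t}(a)\mathbf{1}[a\in\mathcal{E}_\tau]$ be the elite-conditioned distribution from Assumption~\ref{ass:regularity}(c). Its normaliser is $\Prob_{p_{\theta_t}}(\mathcal{E}_\tau)$, which is positive: at $t=0$ this is Assumption~\ref{ass:regularity}(b), and for $t\ge 1$ it follows inductively because every elite point lies in the support of the fitted model. On $\mathcal{E}_\tau$ we have $q\ge\tau$. Hence
\[
\mu_{p^{\mathrm{elite}}_{\theta_t}}=\E_{p_{\theta_t}}[q(a)\mid a\in\mathcal{E}_\tau]\ \ge\ \E_{p_{\theta_t}}[q(a)]=Q_t .
\]
This is the standard fact that conditioning on an upper level set cannot lower the mean. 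It can be seen by writing $Q_t$ as the mixture $\rho_t\,\mu_{\mathrm{elite}}+(1-\rho_t)\,\mu_{\mathrm{non}}$ and using $\mu_{\mathrm{non}}<\tau\le\mu_{\mathrm{elite}}$.

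\emph{Step 2 (projection).} By Theorem~\ref{thm:equiv}, the idealised update is $\tilde p_{t+1}=\Pi_{\mathcal{F}}(p^{\mathrm{elite}}_{\theta_t})$. Step~1 shows that $p^{\mathrm{elite}}_{\theta_t}$ satisfies the hypothesis of Assumption~\ref{ass:qmp}, so that assumption gives $\mu_{\tilde p_{t+1}}\ge Q_t$. The static corpus $\mathcal{L}$ and the accumulated $S_{t-1}$ must also be accounted for in the actual target $\hat p_{S_t}$. I will absorb them into the limit distribution named in Assumption~\ref{ass:regularity}(c), which already asserts that $\hat p_{S_t}\to p^{\mathrm{elite}}_{\theta_t}$. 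So no additional argument is needed for this point.

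\emph{Step 3 (finite-sample error).} It remains to bound $|Q_{t+1}-\mu_{\tilde p_{t+1}}|$, where $Q_{t+1}=\mu_{\Pi_{\mathcal{F}}(\hat p_{S_t})}$. Because $q\in[0,1]$, this difference is at most the total-variation distance between $\Pi_{\mathcal{F}}(\hat p_{S_t})$ and $\Pi_{\mathcal{F}}(p^{\mathrm{elite}}_{\theta_t})$. I would then proceed in two parts:
\begin{enumerate}[leftmargin=*,noitemsep]
  \item Bound the error in the target. The mean of $q$ under the empirical elite distribution deviates from its population value by $O(N^{-1/2})$ by Hoeffding's inequality. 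The effective sample size is about $N\rho_t$, and $\rho_t\ge\rho_0>0$ is bounded away from zero, so the rate is unchanged.
  \item Transfer this error through $\Pi_{\mathcal{F}}$. For this I need a local Lipschitz property of the projection, meaning a standard M-estimator argument for the MLE $\theta_{t+1}$. Under local identifiability, that argument gives $\|\theta_{t+1}-\tilde\theta_{t+1}\|=O_P(N^{-1/2})$. Smoothness of $\theta\mapsto\mu_{p_\theta}$ then yields $\varepsilon_t=O(N^{-1/2})$.
\end{enumerate}
Combining Steps~2 and~3 gives $Q_{t+1}\ge Q_t-\varepsilon_t$.

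The main obstacle is Step~3(2). Assumption~\ref{ass:regularity} does not state a smoothness or identifiability condition on $\Pi_{\mathcal{F}}$, and LoRA families are overparametrised. I would therefore first try to avoid parameter-space arguments entirely. Concretely, I would read Assumption~\ref{ass:regularity}(c) as quantitative consistency at rate $N^{-1/2}$ and apply it together with continuity of $\mu$ under weak convergence, which holds because $q$ is bounded. If that interpretation is not strong enough, I would state explicitly the regularity condition needed, namely that $\Pi_{\mathcal{F}}$ is Lipschitz near $p^{\mathrm{elite}}_{\theta_t}$ in total variation. In that case the $O(N^{-1/2})$ rate should be presented as holding in probability, or in expectation, rather than deterministically.
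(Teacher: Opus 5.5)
Your proposal is correct and follows essentially the paper's route. Both compare the actual update with the ideal CE step $\Pi_{\mathcal{F}}(p^{\mathrm{elite}}_{\theta_t})$, show the elite mean is at least $Q_t$, invoke Assumption~\ref{ass:qmp}, and absorb the finite-sample discrepancy into $\varepsilon_t$; your single mixture argument $Q_t=\rho_t\mu_{\mathrm{elite}}+(1-\rho_t)\mu_{\mathrm{non}}$ neatly replaces the paper's split into the cases $Q_t\le\tau$ and $Q_t>\tau$. The paper handles the last step by setting $\varepsilon_t=\|p_{\theta_{t+1}}-p^*_{\theta_{t+1}}\|_1$ and asserting it is $O(N^{-1/2})$ ``by concentration inequalities,'' so the projection-stability condition you flag in Step~3 is exactly what its proof leaves implicit, and your in-probability reading of the rate is the more accurate statement.
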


\begin{proof}[Proof sketch]
Let $p^*_{\theta_{t+1}}$ be the ideal CE update using the infinite-sample elite distribution. By concentration inequalities, $Q_{t+1}\ge\mu_{p^*_{\theta_{t+1}}}-\varepsilon_t$ with $\varepsilon_t=O(N^{-1/2})$. When $Q_t\le\tau$, the elite distribution has mean quality $\ge\tau\ge Q_t$; when $Q_t>\tau$, the elite distribution stochastically dominates $p_{\theta_t}$. In both cases, Assumption~\ref{ass:qmp} gives $\mu_{p^*_{\theta_{t+1}}}\ge Q_t$. Full proof in Appendix~\ref{app:proofs}.
\end{proof}

\subsection{Convergence of the Generation Distribution}

\begin{definition}[Elite Concentration]
The elite concentration at cycle $t$ is
$C_t = \Prob_{a\sim p_{\theta_t}}(q(a) \ge \tau)$.
\end{definition}

\begin{theorem}[Convergence to Elite Distribution]
\label{thm:conv}
Under Assumption~\ref{ass:regularity}:
\begin{enumerate}[noitemsep]
  \item $\{C_t\}_{t\ge 0}$ is non-decreasing: $C_{t+1} \ge C_t - \varepsilon_t$.
  \item $C_t \to C^*$ as $t\to\infty$, where $C^*$ is a fixed point of the CE
        operator on $\mathcal{F}$.
  \item If $|\mathcal{E}_\tau| < \infty$ and $\mathcal{F}$ contains a distribution
        assigning positive probability to every element of $\mathcal{E}_\tau$
        (Assumption~\ref{ass:regularity}(d)), then $C^* = 1$.
\end{enumerate}
\end{theorem}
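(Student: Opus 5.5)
The plan is to run the argument for Theorem~\ref{thm:quality} again, with the indicator $\mathbf{1}[q(a)\ge\tau]$ in place of $q$, and then add a limit step and a fixed-point identification. Write $p^*_{t+1}=\Pi_{\mathcal{F}}(p^{\mathrm{elite}}_{\theta_t})$ for the ideal infinite-sample CE update. By Theorem~\ref{thm:equiv}, $\theta_{t+1}$ is the KL projection of $\hat p_{S_t}$ onto $\mathcal{F}$. By Assumption~\ref{ass:regularity}(c), $\hat p_{S_t}$ approaches $p^{\mathrm{elite}}_{\theta_t}$, so a Hoeffding bound on the bounded functional $a\mapsto\mathbf{1}[a\in\mathcal{E}_\tau]$ gives
\[
C_{t+1}\ \ge\ \Prob_{a\sim p^*_{t+1}}(a\in\mathcal{E}_\tau)-\varepsilon_t,\qquad \varepsilon_t=O(N^{-1/2}).
\]

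\textbf{Part~1.} The elite distribution $p^{\mathrm{elite}}_{\theta_t}$ places all its mass on $\mathcal{E}_\tau$, so its elite mass is $1\ge C_t$. To transfer this to the projection I would use the indicator analogue of Assumption~\ref{ass:qmp}: $\tau$-thresholding is a monotone transform of $q$, so the same quality-monotone projection property, applied to $\mathbf{1}[q\ge\tau]$, gives $\Prob_{p^*_{t+1}}(\mathcal{E}_\tau)\ge C_t$. I expect this to be the main obstacle. Assumption~\ref{ass:regularity} alone does not force the KL projection to preserve elite mass. The honest fix is to state explicitly that Assumption~\ref{ass:qmp} is applied to $q'=\mathbf{1}[q\ge\tau]$, which is itself a valid $[0,1]$-valued quality function. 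An alternative is to argue directly: because the cross-entropy objective weights only elite points, moving mass from non-elite to elite points never increases the loss. That direct route requires $\mathcal{F}$ to be closed under such reweighting.

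\textbf{Part~2.} Set $\tilde C_t=C_t+\sum_{s\ge t}\varepsilon_s$, assuming summable errors, for example by letting $N_t$ grow. Then $\tilde C_t$ is a non-increasing-error-corrected non-decreasing sequence bounded by $1+\sum_s\varepsilon_s$, so it converges, and hence $C_t\to C^*$. To show that $C^*$ is a fixed point, I would view the CE operator $T(\theta)=\Pi_{\mathcal{F}}(p^{\mathrm{elite}}_\theta)$ as a map on the compact closure of $\{p_{\theta_t}\}$. Take a limit point $\bar p$ and assume $T$ is continuous. Then $T(\bar p)$ is also a limit point with the same elite mass $C^*$, so $C^*$ is a fixed value of the operator's elite-mass map.

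\textbf{Part~3.} Suppose $\bar p$ is a fixed point with $C^*<1$. By Assumption~\ref{ass:regularity}(d), $\mathcal{F}$ contains some $p^\dagger$ that puts positive mass on every $a\in\mathcal{E}_\tau$. Since $\mathcal{E}_\tau$ is finite, $\mathrm{supp}(p^{\mathrm{elite}}_{\bar p})\subseteq\mathcal{E}_\tau$, and the KL projection onto $\mathcal{F}$ can attain cross-entropy close to $H(p^{\mathrm{elite}})$ only by concentrating on $\mathcal{E}_\tau$. In the idealised exact-fit case, $p^{\mathrm{elite}}_{\bar p}\in\mathcal{F}$, so the projection returns it and its elite mass is exactly $1$. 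Any mass left outside $\mathcal{E}_\tau$ strictly increases the KL divergence, because $\mathcal{F}$ can represent an elite-supported alternative. This contradicts $\bar p$ being a fixed point with $C^*<1$, so $C^*=1$. Uniqueness is not needed, since the statement only requires $C^*=1$ for the limit itself.
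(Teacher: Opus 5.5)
The genuine gap is in Part~3. Your step ``any mass left outside $\mathcal{E}_\tau$ strictly increases the KL divergence, because $\mathcal{F}$ can represent an elite-supported alternative'' does not follow from Assumption~\ref{ass:regularity}(d). There are two problems.

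\textbf{Problem 1.} Assumption~(d) only supplies some $p^\dagger\in\mathcal{F}$ with $p^\dagger(a)>0$ for every $a\in\mathcal{E}_\tau$. Nothing makes $p^\dagger$ supported on $\mathcal{E}_\tau$. Read literally, (d) is satisfied by any model that gives every finite token string positive probability, as a softmax LLM does.

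\textbf{Problem 2.} Even an elite-supported member of $\mathcal{F}$ would not suffice. The identity $\KL\bigl(p^{\mathrm{elite}}\,\|\,p'(\cdot\mid\mathcal{E}_\tau)\bigr)=\KL\bigl(p^{\mathrm{elite}}\,\|\,p'\bigr)+\log p'(\mathcal{E}_\tau)$ shows that leaked mass is penalised only relative to the conditioned version of the \emph{same} $p'$. Your argument therefore needs $\mathcal{F}$ to be closed under conditioning on $\mathcal{E}_\tau$. That is exactly the closure you noticed in Part~1 and then dropped.

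A three-point example shows the step really fails. Take $\mathcal{E}_\tau=\{e_1,e_2\}$ and one non-elite point $n$. Let $\mathcal{F}=\{u,p_1\}$, with $u$ uniform on $\mathcal{E}_\tau$ and $p_1=(0.891,0.009,0.1)$ on $(e_1,e_2,n)$. Then $p_1^{\mathrm{elite}}=(0.99,0.01,0)$, and $\KL(p_1^{\mathrm{elite}}\|p_1)=\log(1/0.9)\approx 0.105<\KL(p_1^{\mathrm{elite}}\|u)\approx 0.637$. So $p_1$ is an exact fixed point of the CE operator, and starting from $p_{\theta_0}=p_1$ gives $C_t\equiv 0.9$. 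This happens even though $\mathcal{F}$ contains the uniform distribution on $\mathcal{E}_\tau$.

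The paper's proof rests on the same unproved step. It reads (d) as putting that uniform distribution in $\mathcal{F}$ and asserts that one more CE step then raises $C$. So the contradiction cannot be closed from (d) alone. Your exact-fit branch is a valid argument: if $p^{\mathrm{elite}}_{\bar p}\in\mathcal{F}$, the projection returns it because KL vanishes only at equality. But this is a strictly stronger hypothesis, and it makes Part~3 essentially immediate. State it as the assumption rather than as a case of (d).

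In Part~2 your corrected sequence has the wrong sign. You get $\tilde C_{t+1}-\tilde C_t=(C_{t+1}-C_t)-\varepsilon_t\ge-2\varepsilon_t$, which is not monotone. Use $D_t=C_t-\sum_{s\ge t}\varepsilon_s$ instead. Then $D_{t+1}-D_t=(C_{t+1}-C_t)+\varepsilon_t\ge 0$ and $D_t\le 1$, so $D_t$ converges, and hence so does $C_t$ because the tail sum vanishes.

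With that fix, your treatment is more careful than the paper's. The paper calls $\{C_t\}$ non-decreasing and applies monotone convergence, silently setting $\varepsilon_t=0$. At fixed $N$ the slack does not shrink, so summable errors (growing $N_t$) or the infinite-sample idealisation is genuinely needed.

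Your limit-point identification needs tightness of $\{p_{\theta_t}\}$ and continuity of $T$ and of $p\mapsto p(\mathcal{E}_\tau)$. None of these is in Assumption~\ref{ass:regularity}. It also yields only $(T\bar p)(\mathcal{E}_\tau)=\bar p(\mathcal{E}_\tau)$, not $T\bar p=\bar p$. The paper gives no argument for the fixed-point claim at all, so this is an addition rather than a conflict.

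Part~1 follows the paper's route. The paper's ``apply Theorem~\ref{thm:quality} to $\tilde q$'' tacitly invokes Assumption~\ref{ass:qmp} for $\mathbf{1}[q\ge\tau]$. You rightly make this explicit, since the theorem is stated under Assumption~\ref{ass:regularity} only.
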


\begin{proof}[Proof sketch]
Part~1 applies Theorem~\ref{thm:quality} to $\tilde{q}(a)=\mathbf{1}[q(a)\ge\tau]$. Part~2 follows from the Monotone Convergence Theorem ($\{C_t\}$ bounded, non-decreasing). Part~3 uses a contradiction: if $C^*<1$, a further CE step toward the uniform distribution on $\mathcal{E}_\tau$ would increase $C$, violating the fixed-point property. Full proof in Appendix~\ref{app:proofs}.
\end{proof}

\begin{remark}[LoRA expressiveness ceiling]
\label{rem:lora-ceiling}
Part~3 requires Assumption~\ref{ass:regularity}(d), which is generically violated for LoRA-adapted LLMs. The realistic prediction is Part~2: convergence to some $C^*\le 1$. The observed 73--76\% plateau is consistent with a LoRA-expressiveness ceiling but is not a quantitative prediction of the theory (see Appendix~\ref{app:remarks}).
\end{remark}

\begin{corollary}[Convergence Rate, Unrestricted Family]
\label{cor:rate}
\emph{Under Assumption~\ref{ass:regularity}(d):}
$C_t\ge 1-(1-\rho_0)^t$, requiring at most
$t^*=\lceil\log(\delta)/\log(1-\rho_0)\rceil$ cycles for $C_t\ge 1-\delta$.
For LoRA-adapted LLMs, $C_t\to C^*\le 1$ by Theorem~\ref{thm:conv} Part~2 but neither $C^*$ nor the rate is characterised. See Appendix~\ref{app:remarks} for discussion.
\end{corollary}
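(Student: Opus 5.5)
The plan is to prove the inequality $1-C_t\le(1-\rho_0)^t$ by induction on $t$, working in the idealised infinite-sample regime of Assumption~\ref{ass:regularity}(c), where $\varepsilon_t$ is negligible and $\theta_{t+1}$ is the exact CE projection of $p^{\mathrm{elite}}_{\theta_t}$. At $t=0$ the bound reads $C_0\ge 0$, which is trivial. The inequality $C_1\ge\rho_0$ also follows from Theorem~\ref{thm:conv} Part~1. The cycle count $t^*$ is then pure algebra: require $(1-\rho_0)^t\le\delta$, take logarithms, and note that $\log(1-\rho_0)<0$, so the inequality flips and $t\ge\log\delta/\log(1-\rho_0)$.

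The inductive step needs a one-step contraction of the non-elite mass,
\[
  1-C_{t+1}\le(1-\rho_0)\,(1-C_t).
\]
I would obtain it by viewing $p_{\theta_{t+1}}$ as a mixture. Under Assumption~\ref{ass:regularity}(d), the family $\mathcal{F}$ is rich enough that the CE projection can place the elite-conditional component exactly. So I would write the updated distribution as the projection of a corpus mixture $\hat{p}_{S_t}$. Its fresh elite part has weight reflecting newly admitted elites. Its residual part comes from $\mathcal{L}$ and the earlier corpus, and it is this residual part that carries the non-elite mass.

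The key claim is that each cycle admits at least a $\rho_0$ fraction of "correct" mass. This uses Theorem~\ref{thm:conv} Part~1 to get $C_t\ge C_0=\rho_0$, so a proportion at least $\rho_0$ of the current samples are elite. Refitting on a corpus in which the elite share grows then removes a factor at least $\rho_0$ of the remaining non-elite share. Concretely, I would show that the non-elite fraction of the accumulated corpus obeys $n_{t+1}\le(1-C_t)\,n_t\le(1-\rho_0)\,n_t$. With unrestricted expressiveness, the MLE fit reproduces that fraction exactly, so it transfers to $1-C_{t+1}$.

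The main obstacle is making this contraction rigorous. In Definition~\ref{def:nas} only elites are added to $S_t$, yet the static corpus $\mathcal{L}$ persists, and its non-elite members never get diluted below a fixed positive fraction unless $|S_t|$ grows without bound. I would handle this by treating the $N\to\infty$ limit cycle-wise, so that fresh elites dominate $\mathcal{L}$. If necessary, I would instead state the bound for the pure CE recursion $p_{t+1}=\Pi_{\mathcal{F}}(p^{\mathrm{elite}}_{p_t}\text{ mixed with }p_t)$, where the geometric factor follows directly. I expect the honest argument to require reading the corollary as a bound for this idealised mixture recursion, with the LoRA caveat left to Remark~\ref{rem:lora-ceiling}.
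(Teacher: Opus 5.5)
The paper gives no derivation of this rate. Appendix~\ref{app:proofs} contains no argument for it, and Appendix~\ref{app:remarks} only evaluates $t^*$ at $\rho_0=0.5$. So your proposal has to stand on its own. Your base case and cycle-count algebra are fine. The gap is the inductive contraction $1-C_{t+1}\le(1-\rho_0)(1-C_t)$: nothing in your argument produces it.

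You conflate two different fractions. $C_t$ is the elite share of the \emph{fresh samples}. The unrestricted refit, however, reproduces the elite share of the \emph{accumulated set} $S_t\cup\mathcal{L}$, because the MLE is uniform on that set. Only elites are ever admitted, so the non-elite part of this set is the fixed set $\mathcal{L}\setminus\mathcal{E}_\tau$. Hence, for every $t\ge0$,
\[
  1-C_{t+1}=\frac{|\mathcal{L}\setminus\mathcal{E}_\tau|}{|S_t\cup\mathcal{L}|}
  \ge\frac{|\mathcal{L}\setminus\mathcal{E}_\tau|}{|\mathcal{E}_\tau\cup\mathcal{L}|},
\]
since $S_t\subseteq\mathcal{E}_\tau$ and $\mathcal{E}_\tau$ is finite. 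Your claim $n_{t+1}\le(1-C_t)n_t$ would require $|S_t\cup\mathcal{L}|$ to grow by a factor $1/(1-\rho_0)$ every cycle. The cap above rules this out after finitely many cycles. Letting $N\to\infty$ does not help: $S_t$ is a set, so duplicates add nothing, and the novelty filter rejects near-duplicates. Consequently, whenever $\mathcal{L}\not\subseteq\mathcal{E}_\tau$, the bound is false for the literal process of Definition~\ref{def:nas}, even with an unrestricted family.

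Your first sentence actually assumes this obstacle away, which makes the rest of the proposal inconsistent. If $\theta_{t+1}$ is the exact CE projection of $p^{\mathrm{elite}}_{\theta_t}$ and $\mathcal{F}$ is unrestricted, that projection is $p^{\mathrm{elite}}_{\theta_t}$ itself. Then $C_{t+1}=1$ for all $t\ge0$, and the bound holds trivially, with no $\mathcal{L}$-residual and no contraction. So either you are in that regime, where the geometric rate is vacuous, or in the corpus regime, where it fails.

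Neither fallback supplies the factor $1-\rho_0$:
\begin{itemize}
\item For the mixture recursion $p_{\theta_{t+1}}=\beta\,p^{\mathrm{elite}}_{\theta_t}+(1-\beta)\,p_{\theta_t}$, one gets exactly $1-C_{t+1}=(1-\beta)(1-C_t)$. The rate is therefore whatever weight you choose. Getting $1-\rho_0$ means imposing $\beta\ge\rho_0$ (e.g.\ $\beta=C_t$, which gives $(1-C_t)^2$), which redesigns the update to fit the bound rather than proving it.
\item Assumption~\ref{ass:regularity}(d) does not let the projection ``place the elite-conditional component exactly.'' Take $\mathcal{E}_\tau=\{e_1,e_2\}$, one non-elite $x$, and $\mathcal{F}=\{u,r\}$, with $u$ uniform on $\mathcal{E}_\tau$ and $r=(0.9,0,0.1)$ on $(e_1,e_2,x)$. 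Let $p_{\theta_0}=r$. The elite conditional of $r$ is the point mass at $e_1$, and it projects back to $r$, since $\KL=\log(10/9)<\log 2$. So $C_t=0.9$ for all $t$, contradicting the required $C_2\ge0.99$.
\end{itemize}
Your step needs the fully unrestricted family, and there the statement holds only in the trivial sense above.
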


\subsection{Valid-Rate Advantage of Delta Generation}

We replace the original independent token-error model (which is inconsistent with LLM
autoregressive generation) with a first-order Markov chain model that captures
error correlation.

\begin{definition}[Markov Token-Error Model]
\label{def:markov}
The LLM generates tokens via a two-state Markov chain over
$\{\mathrm{correct}, \mathrm{error}\}$ with transition probabilities:
$\Prob(\mathrm{error} \mid \mathrm{correct}) = \varepsilon_t \in (0,1)$ (base error
rate) and $\Prob(\mathrm{error} \mid \mathrm{error}) = \gamma_t \in [\varepsilon_t, 1)$
(error persistence, capturing autocorrelation). Setting $\gamma_t = \varepsilon_t$
recovers the independent model. A generated sequence is \emph{valid} if it contains no
error tokens \emph{and} satisfies format-level structural constraints. Let $\pi_{\mathrm{full}}$
and $\pi_\delta$ denote the format-validity probabilities for full programs and diffs,
respectively, conditional on token-level correctness.
\end{definition}

\noindent Under Definition~\ref{def:markov}, the stationary probability of the correct state is $\pi_c=(1-\gamma_t)/(1-\gamma_t+\varepsilon_t)$. Starting from stationarity, a length-$L$ error-free sequence requires $L-1$ correct$\to$correct transitions:
\begin{equation}
  P(\text{no error}\mid L,\varepsilon_t)
    = \pi_c\cdot(1-\varepsilon_t)^{L-1}
    = C(\gamma_t,\varepsilon_t)\,(1-\varepsilon_t)^{L},
  \label{eq:markov_valid}
\end{equation}
where $C(\gamma_t,\varepsilon_t)=\pi_c/(1-\varepsilon_t)$ is independent of $L$ and $\lambda_t=1-\varepsilon_t$ is the per-token validity probability. The $\gamma_t$-dependence enters only through $C$, not the decay rate (see Appendix~\ref{app:proofs} for the absorbing-error analysis).

\begin{theorem}[Valid Rate Advantage of Delta Generation]
\label{thm:delta}
Let $r^{\mathrm{full}}_t$ and $r^\delta_t$ denote the valid generation rates under
full-code and delta generation, with mean output lengths $L_{\mathrm{full}}$ and
$L_\delta = \alpha L_{\mathrm{full}}$, $\alpha\in(0,1)$. Under
Definition~\ref{def:markov}, with per-token validity probability
$\lambda_t = 1-\varepsilon_t \in (0,1)$,
\[
  \frac{r^\delta_t}{r^{\mathrm{full}}_t}
    \;=\; \lambda_t^{(\alpha-1)L_{\mathrm{full}}}
          \cdot \frac{\pi_\delta}{\pi_{\mathrm{full}}}.
\]
Since $\alpha < 1$, the exponent $(\alpha-1)L_{\mathrm{full}} < 0$, so
$\lambda_t^{(\alpha-1)L_{\mathrm{full}}} > 1$ for all $\varepsilon_t > 0$.
If additionally $\pi_\delta \ge \pi_{\mathrm{full}}$, then
$r^\delta_t > r^{\mathrm{full}}_t$.
\end{theorem}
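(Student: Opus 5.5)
The plan is to compute both valid-generation rates from the Markov formula \eqref{eq:markov_valid} and then take their ratio. Validity is defined as the conjunction of token-level correctness and format-level correctness. The format-validity probabilities $\pi_{\mathrm{full}}$ and $\pi_\delta$ are defined \emph{conditional} on token-level correctness, so the rate under each mode factorises as
\[
  r^{\mathrm{full}}_t = \pi_{\mathrm{full}}\, C(\gamma_t,\varepsilon_t)\,\lambda_t^{L_{\mathrm{full}}},
  \qquad
  r^{\delta}_t = \pi_{\delta}\, C(\gamma_t,\varepsilon_t)\,\lambda_t^{L_{\delta}}.
\]
Both modes use the same underlying chain (same $\varepsilon_t,\gamma_t$), so the length-independent prefactor $C(\gamma_t,\varepsilon_t)=\pi_c/(1-\varepsilon_t)$ is identical in the two expressions. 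The first step is to justify this shared chain explicitly, since it is the point where the error-persistence parameter $\gamma_t$ drops out of the final ratio.

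The second step is to divide the two expressions. The $C$ factors cancel, leaving
\[
  \frac{r^\delta_t}{r^{\mathrm{full}}_t} = \lambda_t^{L_\delta - L_{\mathrm{full}}}\,\frac{\pi_\delta}{\pi_{\mathrm{full}}}.
\]
Substituting $L_\delta=\alpha L_{\mathrm{full}}$ turns the exponent into $(\alpha-1)L_{\mathrm{full}}$, which is exactly the claimed identity. For the inequality, note that $\lambda_t\in(0,1)$ and the exponent is strictly negative because $\alpha<1$ and $L_{\mathrm{full}}>0$. Hence $\lambda_t^{(\alpha-1)L_{\mathrm{full}}}=(1/\lambda_t)^{(1-\alpha)L_{\mathrm{full}}}>1$. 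Multiplying by $\pi_\delta/\pi_{\mathrm{full}}\ge 1$ gives $r^\delta_t>r^{\mathrm{full}}_t$. We assume $\pi_{\mathrm{full}}>0$ so that the ratio is defined; this should be stated as a standing nondegeneracy condition.

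The main obstacle is that the statement uses \emph{mean} output lengths, whereas \eqref{eq:markov_valid} holds for a fixed length $L$. With random lengths, the rate is $\E[\lambda_t^{L}]$, and this is not in general equal to $\lambda_t^{\E L}$. I would handle this in two ways. The primary reading treats $L_{\mathrm{full}}$ and $L_\delta$ as the representative (deterministic) lengths for each mode, so the identity holds exactly. As a robustness remark, if $L_\delta$ is stochastically dominated by $L_{\mathrm{full}}$, then monotonicity of $L\mapsto\lambda_t^L$ still gives $\E[\lambda_t^{L_\delta}]\ge\E[\lambda_t^{L_{\mathrm{full}}}]$, so the qualitative conclusion $r^\delta_t>r^{\mathrm{full}}_t$ survives even though the closed-form ratio becomes a first-order approximation.

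A secondary point is the stationarity-start assumption behind \eqref{eq:markov_valid}. If generation instead starts in the correct state, $C$ changes to $1/(1-\varepsilon_t)$, but it is still common to both modes and cancels in the same way. The ratio is therefore unaffected by this choice.
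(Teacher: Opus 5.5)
Your proposal is correct and follows the paper's proof: write $r^{\mathrm{full}}_t = C(\gamma_t,\varepsilon_t)\,\lambda_t^{L_{\mathrm{full}}}\,\pi_{\mathrm{full}}$ and $r^\delta_t = C(\gamma_t,\varepsilon_t)\,\lambda_t^{\alpha L_{\mathrm{full}}}\,\pi_\delta$ from \eqref{eq:markov_valid}, cancel the common $C$, and use $\lambda_t\in(0,1)$ together with the negative exponent $(\alpha-1)L_{\mathrm{full}}$. Your additional remarks are sound points that the paper leaves implicit: the nondegeneracy condition $\pi_{\mathrm{full}}>0$, reading the mean lengths as representative deterministic lengths (with the stochastic-dominance fallback), and the fact that $C$ still cancels under a different initial state.
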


\begin{proof}[Proof sketch]
From \eqref{eq:markov_valid}, the constants $C(\gamma_t,\varepsilon_t)$ cancel in the ratio, yielding the stated expression. Since $\alpha<1$ and $\lambda_t\in(0,1)$, the exponential term exceeds 1. Full derivation in Appendix~\ref{app:proofs}.
\end{proof}

\begin{corollary}[Direction-of-effect ratio prediction]
\label{cor:ratio}
For $\lambda_t \ge \underline{\lambda}$ and $\pi_\delta/\pi_{\mathrm{full}} \ge 1$:
$r^\delta_t/r^{\mathrm{full}}_t
  \ge \exp[(1-\alpha)L_{\mathrm{full}}\log(1/\underline{\lambda})] > 1.$
With $\alpha\approx 0.20$, $L_{\mathrm{full}}\approx 200$, $\underline\lambda\approx 0.995$ (post-hoc calibrated, not first-principle), this gives $\approx 2.23$. The empirical ratio is $1.41$, confirming the direction ($r^\delta>r^{\mathrm{full}}$) but undershooting the calibrated value due to format-invalid diffs and longer-range token correlations not captured by the first-order model (see Appendix~\ref{app:remarks}).
\end{corollary}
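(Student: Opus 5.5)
The plan is to start from the exact ratio in Theorem~\ref{thm:delta} and bound its two factors separately. Theorem~\ref{thm:delta} gives
\[
  \frac{r^\delta_t}{r^{\mathrm{full}}_t}
  = \lambda_t^{(\alpha-1)L_{\mathrm{full}}}\cdot\frac{\pi_\delta}{\pi_{\mathrm{full}}}
  = \exp\!\bigl[(1-\alpha)L_{\mathrm{full}}\log(1/\lambda_t)\bigr]\cdot\frac{\pi_\delta}{\pi_{\mathrm{full}}},
\]
where I only rewrite $\lambda_t^{-x}=\exp[x\log(1/\lambda_t)]$ with $x=(1-\alpha)L_{\mathrm{full}}>0$. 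The format factor is handled directly by the hypothesis $\pi_\delta/\pi_{\mathrm{full}}\ge 1$, which lets me drop it and keep a lower bound.

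For the exponential factor I would argue by monotonicity in $\lambda_t$. The map $\lambda\mapsto\log(1/\lambda)$ is strictly decreasing on $(0,1)$, and $x>0$ because $\alpha<1$. Hence $\lambda_t\ge\underline{\lambda}$ gives $\log(1/\lambda_t)\le\log(1/\underline\lambda)$, and so $\exp[x\log(1/\lambda_t)]\le\exp[x\log(1/\underline\lambda)]$.

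This step is the main obstacle: the inequality points the wrong way. A lower bound $\lambda_t\ge\underline\lambda$ yields an \emph{upper} bound on the exponential factor, not a lower one. I will therefore either restate the hypothesis as $\lambda_t\le\overline{\lambda}$, i.e.\ a floor on the per-token error rate $\varepsilon_t\ge 1-\overline\lambda$, which gives the displayed lower bound with $\overline\lambda$ in place of $\underline\lambda$, or present $2.23$ as the value of the ratio at $\lambda_t=\underline\lambda$. I would flag this in the text.

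The strict inequality ``$>1$'' then follows because $x>0$ and the chosen $\lambda$ bound lies strictly below $1$, so the exponent is strictly positive.

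Finally I would carry out the numerical calibration. With $\alpha=0.20$ and $L_{\mathrm{full}}=200$ we get $x=160$. Also $\log(1/0.995)\approx 0.0050125$, so the exponent is about $0.802$ and $e^{0.802}\approx 2.23$. This matches the stated value.

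The remaining claims are empirical statements rather than mathematics, and I would not try to prove them. These are the comparison with the observed ratio $1.41$, the direction being confirmed because $1.41>1$, and the attribution of the gap to format-invalid diffs and longer-range correlations. I would point the reader to Appendix~\ref{app:remarks} for that discussion.
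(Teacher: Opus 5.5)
Your diagnosis is correct, and the defect is in the statement, not in your argument. The paper gives no separate proof of this corollary. It is meant to follow from Theorem~\ref{thm:delta} by exactly your two steps: drop $\pi_\delta/\pi_{\mathrm{full}}\ge 1$, then replace $\lambda_t$ by $\underline{\lambda}$. The second step is invalid for the reason you give. The map $\lambda\mapsto\lambda^{(\alpha-1)L_{\mathrm{full}}}=\exp[(1-\alpha)L_{\mathrm{full}}\log(1/\lambda)]$ is strictly decreasing on $(0,1)$, so $\lambda_t\ge\underline{\lambda}$ bounds the length factor from above, not below.

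The displayed inequality is therefore false as written. Take $\alpha=0.2$, $L_{\mathrm{full}}=200$, $\underline{\lambda}=0.995$, $\pi_\delta=\pi_{\mathrm{full}}$ and $\lambda_t=0.999\ge\underline{\lambda}$. Then the ratio equals $\exp[160\log(1/0.999)]\approx 1.17<2.23$. Letting $\lambda_t\to 1$ drives the ratio down to $\pi_\delta/\pi_{\mathrm{full}}$, so under the stated hypotheses no constant exceeding $\pi_\delta/\pi_{\mathrm{full}}$ can be a uniform lower bound. Only the qualitative claim survives: $r^\delta_t/r^{\mathrm{full}}_t\ge\pi_\delta/\pi_{\mathrm{full}}\ge 1$, with strict inequality for each fixed $\lambda_t<1$. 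That is already Theorem~\ref{thm:delta}.

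Your first repair assumes $\lambda_t\le\overline{\lambda}<1$, i.e.\ a floor $\varepsilon_t\ge 1-\overline{\lambda}$ on the per-token error rate. It gives a correct proof of the displayed bound, including the strict ``$>1$''. Your arithmetic also checks out: $160\log(1/0.995)\approx 0.802$ and $e^{0.802}\approx 2.23$.

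Under that repaired statement, however, the observed $1.41$ violates the bound rather than ``undershooting'' it. It can then only be explained by a failed hypothesis (e.g.\ $\pi_\delta<\pi_{\mathrm{full}}$ from format-invalid diffs) or by the Markov model itself being wrong. The paper's wording (``calibrated value'', ``undershooting'') fits your second reading, where $2.23$ is the model's point value at $\lambda_t=\underline{\lambda}$ with $\pi_\delta=\pi_{\mathrm{full}}$. Under the original hypothesis $\lambda_t\ge\underline{\lambda}$, $2.23$ is in fact an upper bound on the length factor.

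Either way the statement has to be corrected rather than proved as written. You are right to keep the empirical comparison with $1.41$ and its attribution outside the proof.
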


\subsection{Novelty Filter Prevents Mode Collapse}

\begin{definition}[Mode Collapse]
The process mode-collapses at cycle $t$ if $p_{\theta_t}(a^*) \to 1$ for some single
architecture $a^*$.
\end{definition}

\begin{definition}[Novelty-Separated Corpus]
\label{def:novelty}
A corpus $S$ is $(1-\tau_{\mathrm{nov}})$-separated if for all $a \ne a' \in S$:
$d_J(a, a') \ge 1 - \tau_{\mathrm{nov}}$, where $d_J$ is the Jaccard distance.
\end{definition}

\begin{theorem}[Novelty Filter Prevents Mode Collapse]
\label{thm:novelty}
Suppose the MinHash--Jaccard novelty filter at threshold $\tau_{\mathrm{nov}}\in(0,1)$
is applied at each cycle. Then:
\begin{enumerate}[noitemsep]
  \item $S_t$ remains $(1-\tau_{\mathrm{nov}})$-separated for all $t$.
  \item If $p_{\theta_{t+1}}$ minimises $\KL(\hat{p}_{S_t} \| p_\theta)$ and
        $|S_t| \ge 2$, then $p_{\theta_{t+1}}(a) \le 1-\delta$ for all $a\in\mathcal{A}$,
        where $\delta > 0$ depends on $\tau_{\mathrm{nov}}$ and $|S_t|$.
  \item $H(p_{\theta_{t+1}}) \ge H_{\mathrm{bin}}(\delta) > 0$, where
        $H_{\mathrm{bin}}(\delta) = -\delta\log\delta - (1-\delta)\log(1-\delta)$
        is the binary entropy function.
\end{enumerate}
\end{theorem}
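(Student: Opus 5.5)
Part~1 goes by induction on $t$. The base corpus $S_{-1}=\emptyset$ (or the initial corpus, taken as separated by construction of $\mathcal{L}$) is vacuously $(1-\tau_{\mathrm{nov}})$-separated. For the inductive step, suppose $S_{t-1}$ is separated. A candidate $a$ enters $\mathcal{A}^+_t$ only if it passes the filter, which means its Jaccard similarity to every element of the current corpus is at most $\tau_{\mathrm{nov}}$. Equivalently, $d_J(a,s)\ge 1-\tau_{\mathrm{nov}}$ for all $s\in S_{t-1}$.

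There is a subtlety: two admissions within the same cycle must also be separated from each other. I would therefore read the filter as sequential, with the "current corpus" updated after each admission, and state this explicitly. I would also treat the filter as exact Jaccard. The MinHash approximation error is either assumed away or absorbed into a slightly perturbed threshold, with a remark. With those conventions, every pair in $S_t=S_{t-1}\cup\mathcal{A}^+_t$ is separated. In particular, separation with $\tau_{\mathrm{nov}}<1$ forces the elements of $S_t$ to be pairwise distinct.

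For Part~2, I would use Theorem~\ref{thm:equiv}: $p_{\theta_{t+1}}$ minimises $\KL(\hat{p}_{S_t}\|p_\theta)$, where $\hat p_{S_t}$ is uniform over $n:=|S_t\cup\mathcal{L}|\ge 2$ distinct points. Suppose some $a^*$ had $p_{\theta_{t+1}}(a^*)>1-\delta$. Then every other support point $s$ of $\hat p_{S_t}$ has $p_{\theta_{t+1}}(s)<\delta$. Since at most one support point can equal $a^*$, the objective satisfies
\[
\KL \ge -\log n + \frac{n-1}{n}\log(1/\delta).
\]
This lower bound tends to infinity as $\delta\to 0$.

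I would compare this with a reference member of $\mathcal{F}$ whose KL is finite and equal to some $K$: for example $p_{\theta_t}$, or a distribution giving every point of $S_t$ positive mass. Choosing $\delta$ so that $\frac{n-1}{n}\log(1/\delta)-\log n>K$ contradicts minimality. This yields an explicit $\delta=\exp\!\bigl(-\tfrac{n}{n-1}(K+\log n)\bigr)$, which depends on $n$ and on $K$.

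The main obstacle is this step. The statement says $\delta$ depends on $\tau_{\mathrm{nov}}$ and $|S_t|$, but the argument genuinely needs a finite-KL comparator in $\mathcal{F}$. That is a weak form of Assumption~\ref{ass:regularity}(d) restricted to $S_t$, and I would state it as a hypothesis. $\tau_{\mathrm{nov}}$ enters only by guaranteeing that the support points are distinct (via Part~1). In the idealised unrestricted case the minimiser is $\hat p_{S_t}$ itself, so $\delta=1-1/n$ and the bound is clean. I would present that case first and then the parametric version.

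For Part~3, write $m=\max_a p_{\theta_{t+1}}(a)\le 1-\delta$. I would use $H(p)\ge -\log m\ge \log\frac{1}{1-\delta}$, but the stated bound is binary entropy, so instead I would use grouping. Lump all outcomes other than the mode $a_0$ into one cell. The entropy is at least the entropy of the two-cell partition, $H_{\mathrm{bin}}(p(a_0))$. Since $p(a_0)\le 1-\delta$, and $H_{\mathrm{bin}}(x)\ge H_{\mathrm{bin}}(1-\delta)=H_{\mathrm{bin}}(\delta)$ whenever $x\le 1-\delta$ and $x\ge 1/2$, the bound follows in that range. If instead $p(a_0)<1/2$, then every mass is below $1/2$, and a split into two groups each of mass in $[1/4,3/4]$ exists. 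That gives entropy at least $H_{\mathrm{bin}}(1/4)\ge H_{\mathrm{bin}}(\delta)$ when $\delta\le 1/4$; in general one may take $\delta\le1/4$ without loss. Positivity then follows from $\delta\in(0,1)$.
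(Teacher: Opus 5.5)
Your proposal is correct and follows essentially the same route as the paper: induction on the admission rule for Part~1, finiteness of the KL at the minimiser forcing mass onto every corpus point (and hence capping the largest atom) for Part~2, and coarse-graining to a binary indicator for Part~3. Your version is more careful than the paper's in three places. It states explicitly the finite-KL comparator that the paper assumes without justification via $K^*<\infty$. It handles separation among admissions within the same cycle. It also keeps the $-\log n$ term that the paper's inequality $K^*\ge\frac{1}{|S_t|}\bigl(-\log p_{\theta^*}(a)\bigr)$ drops; that inequality already fails at $p_{\theta^*}=\hat p_{S_t}$, where $K^*=0$ would give $\delta_t=1$.
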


\begin{proof}[Proof sketch]
Part~1 is by induction on the admission criterion. For Part~2, writing $K^*=\KL(\hat p_{S_t}\|p_{\theta^*})<\infty$, each corpus element satisfies $p_{\theta^*}(a)\ge\exp(-K^*|S_t|)=:\delta_t>0$, so $\max_a p_{\theta^*}(a)\le 1-\delta_t$. Part~3 follows by the data-processing inequality: $H(p_{\theta^*})\ge H_{\mathrm{bin}}(\delta_t)>0$. Note that $\delta_t$ decays with $|S_t|$, so the bound prevents degenerate mode collapse at any finite cycle but is not uniform across cycles. Full proof and discussion of the $\delta_t$-dependence in Appendix~\ref{app:proofs}.
\end{proof}

\section{Proxy Reliability Theory}
\label{sec:proxy}

\begin{definition}[Signal-to-Noise Ratio of Proxy]
Given architectures $a_1,\ldots,a_k$ with first-epoch accuracies
$\hat{q}_1,\ldots,\hat{q}_k$ and fully-trained accuracies $q_1,\ldots,q_k$, define:
\[
  \SNR = \frac{\sigma^2_{\mathrm{arch}}}{\sigma^2_{\mathrm{noise}}},
\]
where $\sigma^2_{\mathrm{arch}} = \mathrm{Var}_i[q_i]$ is the cross-architecture
variance and $\sigma^2_{\mathrm{noise}} = \mathrm{Var}[q_i - \hat{q}_i]$.
\end{definition}

\begin{theorem}[Proxy Reliability]
\label{thm:proxy}
Suppose $(q_i,\hat q_i)$ are i.i.d.\ samples from a \emph{bivariate Normal}
distribution\footnote{The result extends to other elliptical families with
appropriate kurtosis corrections; we state the Normal case for clarity, since
that is the regime in which the Kruskal--Kendall identity holds without
correction.} with $\hat q_i = q_i + \xi_i$, $\xi_i$ independent of $q_i$ and
of variance $\sigma^2_{\mathrm{noise}}$, and $q_i$ of variance
$\sigma^2_{\mathrm{arch}}$. Let
$\rho_{\mathrm{P}}(\SNR) = (1+\SNR^{-1})^{-1/2}$ denote the population Pearson
correlation between $q$ and $\hat q$. Then the population Spearman rank
correlation $\rho_{\mathrm{S}}$ satisfies the closed form
\begin{equation}
  \rho_{\mathrm{S}}
   = \tfrac{6}{\pi}\arcsin\!\bigl(\rho_{\mathrm{P}}(\SNR)/2\bigr),
  \label{eq:rho-snr}
\end{equation}
which is strictly monotonically increasing in $\SNR$, satisfies
$\rho_{\mathrm{S}}\to 0$ as $\SNR\to 0$ and $\rho_{\mathrm{S}}\to 1$ as
$\SNR\to\infty$. In the large-$\SNR$ regime, expanding around
$\rho_{\mathrm{P}}=1$ yields the auxiliary bound
\begin{equation}
  \rho_{\mathrm{S}}
  \;\ge\; 1 - \frac{C}{\SNR}
  \quad\text{whenever}\quad \SNR \ge C,
  \qquad
  C = \frac{\sqrt{3}}{\pi} \approx 0.551,
  \label{eq:rho-bound}
\end{equation}
which is non-trivial only for $\SNR > C$.
\end{theorem}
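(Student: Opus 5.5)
We plan to prove the statement in three stages: compute $\rho_{\mathrm{P}}$ from the additive-noise model, obtain the arcsine identity from Gaussian orthant probabilities, and derive the large-$\SNR$ bound from a convexity (tangent-line) argument.

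\textbf{Step 1 (Pearson correlation).} Since $\xi_i$ is independent of $q_i$, we have $\mathrm{Cov}(q,\hat q)=\sigma^2_{\mathrm{arch}}$ and $\mathrm{Var}(\hat q)=\sigma^2_{\mathrm{arch}}+\sigma^2_{\mathrm{noise}}$. Here we use that $\mathrm{Var}[q-\hat q]=\mathrm{Var}[\xi]$, so the paper's $\sigma^2_{\mathrm{noise}}$ is exactly the noise variance. Hence
\[
\rho_{\mathrm{P}}=\frac{\sigma_{\mathrm{arch}}}{\sqrt{\sigma^2_{\mathrm{arch}}+\sigma^2_{\mathrm{noise}}}}=(1+\SNR^{-1})^{-1/2}.
\]
This function is strictly increasing in $\SNR$, tends to $0$ as $\SNR\to0$, and tends to $1$ as $\SNR\to\infty$.

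\textbf{Step 2 (Spearman--Pearson identity).} This is the main step. Let $F,G$ be the marginal CDFs of $q$ and $\hat q$. We write the population Spearman correlation as
\[
\rho_{\mathrm{S}}=12\,\E[F(q)G(\hat q)]-3 .
\]
Let $(q_1,\hat q_1),(q_2,\hat q_2),(q_3,\hat q_3)$ be three independent copies. Then
\[
\rho_{\mathrm{S}}=3\bigl(2\,\Prob[(q_1-q_2)(\hat q_1-\hat q_3)>0]-1\bigr),
\]
the standard representation via one concordant pair sharing a common index.

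The vector $(q_1-q_2,\ \hat q_1-\hat q_3)$ is bivariate Normal with mean zero. Each coordinate has variance twice the corresponding marginal variance. The covariance is $\mathrm{Cov}(q_1,\hat q_1)$, because only the shared index-$1$ terms contribute. The correlation is therefore $\rho_{\mathrm{P}}/2$.

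By Sheppard's orthant formula, two centred jointly Normal variables with correlation $r$ have the same sign with probability $\tfrac12+\tfrac1\pi\arcsin r$. Substituting $r=\rho_{\mathrm{P}}/2$ gives $\rho_{\mathrm{S}}=\tfrac{6}{\pi}\arcsin(\rho_{\mathrm{P}}/2)$.

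Monotonicity follows because $\rho_{\mathrm{S}}$ is a composition of increasing maps. The limits follow from $\arcsin 0=0$ and $\tfrac6\pi\arcsin\tfrac12=1$. I expect the main care to be needed in the three-copy representation of $\rho_{\mathrm{S}}$, specifically in getting the factor $1/2$ in the correlation right. I would verify it directly by expanding $\E[F(q_1)G(\hat q_1)]=\Prob[q_2<q_1,\ \hat q_3<\hat q_1]$.

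\textbf{Step 3 (large-$\SNR$ bound).} Set $f(\rho)=\tfrac6\pi\arcsin(\rho/2)$. Because $\arcsin$ is convex on $[0,1]$, $f$ is convex on $[0,1]$, so it lies above its tangent line at $\rho=1$. That is, $1-f(\rho)\le f'(1)(1-\rho)$, where
\[
f'(1)=\frac{6}{\pi}\cdot\frac{1/2}{\sqrt{3/4}}=\frac{2\sqrt3}{\pi}.
\]
Next, Bernoulli's inequality $(1+x)^{-1/2}\ge 1-x/2$ with $x=\SNR^{-1}$ gives $1-\rho_{\mathrm{P}}\le 1/(2\,\SNR)$. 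Combining the two estimates yields
\[
1-\rho_{\mathrm{S}}\le\frac{\sqrt3}{\pi\,\SNR}=\frac{C}{\SNR}.
\]
This inequality holds for every $\SNR>0$. The restriction $\SNR\ge C$ in the statement only marks where the right-hand side $1-C/\SNR$ is non-negative, i.e.\ where the bound is non-trivial.
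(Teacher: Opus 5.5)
Your proof is correct and has the same skeleton as the paper's. Both substitute $\rho_{\mathrm{P}}=(1+\SNR^{-1})^{-1/2}$ into $\rho_{\mathrm{S}}=\tfrac{6}{\pi}\arcsin(\rho_{\mathrm{P}}/2)$, then control $1-\rho_{\mathrm{S}}$ through the slope $f'(1)=2\sqrt{3}/\pi$ at $\rho_{\mathrm{P}}=1$. You differ in self-containment and rigour.

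The paper simply cites the Kruskal--Kendall identity. You derive it from $\rho_{\mathrm{S}}=12\,\E[F(q)G(\hat q)]-3$, the three-copy concordance representation and Sheppard's orthant formula. Your correlation $\rho_{\mathrm{P}}/2$ for $(q_1-q_2,\hat q_1-\hat q_3)$ is right.

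For the auxiliary bound, the paper combines the expansion $\rho_{\mathrm{P}}=1-\tfrac12\SNR^{-1}+O(\SNR^{-2})$ with ``concavity of $f$''. That argument has two problems. First, $f''(\rho)=\tfrac{3}{\pi}\,\tfrac{\rho/4}{(1-\rho^2/4)^{3/2}}>0$ on $(0,1]$, so $f$ is convex there. Concavity would put $f$ below its tangent and give the inequality in the wrong direction. Second, an unsigned $O(\SNR^{-2})$ remainder cannot by itself yield an exact inequality.

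Your argument fixes both points. Convexity of $f$ gives the tangent-line bound $1-f(\rho)\le f'(1)(1-\rho)$. Bernoulli's inequality, i.e.\ convexity of $x\mapsto(1+x)^{-1/2}$, supplies the signed estimate $1-\rho_{\mathrm{P}}\le 1/(2\,\SNR)$. Together these give $\rho_{\mathrm{S}}\ge 1-C/\SNR$ for every $\SNR>0$, so the hypothesis $\SNR\ge C$ only marks where the bound is informative.

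The paper's route buys brevity by leaning on a classical result. Yours buys a complete proof with the correct curvature and a globally valid bound.
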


\begin{proof}[Proof sketch]
Apply the Kruskal--Kendall identity $\rho_{\mathrm{S}}=\tfrac{6}{\pi}\arcsin(\rho_{\mathrm{P}}/2)$ \citep{kruskal1958,kendall1948} with $\rho_{\mathrm{P}}=(1+\SNR^{-1})^{-1/2}$. Monotonicity and the two limits follow directly. The auxiliary bound \eqref{eq:rho-bound} uses a Taylor expansion at $\rho_{\mathrm{P}}=1$. Full derivation and discussion of scope in Appendix~\ref{app:proofs}.
\end{proof}

\begin{corollary}[DeepSeek Ceiling Effect]
\label{cor:deepseek}
The DeepSeek top-20 architectures~\citep{adhikari2026delta} are 100\% MNIST with $\SNR\approx 0.011$, giving predicted $\rho_{\mathrm{S}}\approx 0.10$. The observed non-significant $\hat\rho_{\mathrm{S}}=0.495$ ($p=0.102$, $N=12$) is consistent with this direction-of-effect prediction (see Appendix~\ref{app:remarks} for discussion).
\end{corollary}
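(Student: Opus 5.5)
The plan is to apply Theorem~\ref{thm:proxy} directly to the reported SNR, and then interpret the empirical result as a finite-sample check. The corollary really makes two claims: a numerical prediction $\rho_{\mathrm{S}}\approx 0.10$, and a consistency statement about the observed $\hat\rho_{\mathrm{S}}=0.495$. I would treat these separately.

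\textbf{Step 1 (the predicted value).} First I would justify the low SNR. Because all top-20 DeepSeek architectures are MNIST models, their fully-trained accuracies crowd against the ceiling near $1$. This truncation collapses the cross-architecture variance $\sigma^2_{\mathrm{arch}}$, while $\sigma^2_{\mathrm{noise}}$ is not reduced in the same way. This gives the reported $\SNR\approx 0.011$, taken as an input from the measurements of \citet{adhikari2026delta}. Then I would substitute into $\rho_{\mathrm{P}}(\SNR)=(1+\SNR^{-1})^{-1/2}$:
\[
\rho_{\mathrm{P}}=(1+90.9)^{-1/2}\approx 0.104 .
\]
Next I would apply \eqref{eq:rho-snr}. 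Since $\rho_{\mathrm{P}}/2\approx 0.052$ is small, $\arcsin x\approx x$ to within a relative error of order $x^2/6$, so
\[
\rho_{\mathrm{S}}\approx \tfrac{6}{\pi}\cdot 0.052\approx 0.0997\approx 0.10 .
\]
As a sanity check, strict monotonicity in Theorem~\ref{thm:proxy} shows that this prediction lies far below that for any SNR of order $1$. The auxiliary bound \eqref{eq:rho-bound} is vacuous here because $\SNR<C$, so only the closed form is used.

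\textbf{Step 2 (consistency of the observation).} Here I would show that, with $N=12$, the value $\hat\rho_{\mathrm{S}}=0.495$ is statistically compatible with a population value near $0.10$. Two checks would be used.

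\begin{itemize}
\item \emph{Reported $p$-value.} The usual $t$-approximation gives
\[
t=\hat\rho\sqrt{N-2}\,/\sqrt{1-\hat\rho^2}\approx 1.80
\]
on $10$ degrees of freedom. This yields $p\approx 0.10$, which reproduces the reported $p=0.102$ and confirms non-significance.
\item \emph{Confidence interval.} I would build a Fisher-$z$ interval with the Fieller standard error $\sqrt{1.06/(N-3)}\approx 0.34$. Since $\operatorname{atanh}(0.495)\approx 0.54$, the $95\%$ interval is roughly $(-0.13,\,0.83)$ on the correlation scale, and it contains $0.10$. The null-scale standard deviation $\approx 1/\sqrt{N-1}\approx 0.30$ puts the discrepancy $0.395$ at about $1.3$ standard deviations.
\end{itemize}

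Together these establish the direction-of-effect consistency: the observation is non-significant and does not reject the predicted value.

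\textbf{Main obstacle.} The hard part is not the arithmetic but scope. Ceiling truncation makes $q$ non-Normal (skewed and bounded), so the bivariate-Normal hypothesis of Theorem~\ref{thm:proxy} holds only approximately. I would therefore phrase the conclusion as a direction-of-effect prediction rather than a calibrated one, and argue two points. First, truncation only shrinks $\sigma^2_{\mathrm{arch}}$ further, which reinforces the low-SNR regime. Second, the Spearman statistic is invariant to monotone transforms of the marginals, so a Gaussian-copula reading of the Kruskal--Kendall identity remains reasonable. The residual gap between $0.495$ and $0.10$ I would defer to Appendix~\ref{app:remarks}.
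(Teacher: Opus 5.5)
Your proposal is correct and follows the paper's own, largely implicit, argument: substitute $\SNR\approx 0.011$ into \eqref{eq:rho-snr} to get $\rho_{\mathrm{S}}\approx 0.10$, then read the non-significant $\hat\rho_{\mathrm{S}}=0.495$ at $N=12$ as finite-sample fluctuation plus mild non-Normality, as in Appendix~\ref{app:remarks}; your $t$-approximation reproducing $p\approx 0.102$ and your Fisher-$z$ interval containing $0.10$ check out and quantify what the paper only asserts. One caution on your scope remark: a Gaussian-copula reading validates the arcsin identity only for the latent correlation, not for $\rho_{\mathrm{P}}$ computed from the measured, ceiling-compressed $\SNR$, so truncation does not ``reinforce'' the prediction but can make $0.10$ an underestimate --- exactly what the paper's Qwen row ($\SNR=0.011$, $\hat\rho=0.635$) illustrates.
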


\subsection{Illustrative Case Study for Theorem~\ref{thm:proxy}}
\label{sec:proxy-validation}

To illustrate the proxy-reliability theory in a controlled setting, we fully trained the top-20
delta-generated architectures per LLM for 50~epochs on their original LEMUR
datasets~\citep{adhikari2026delta}.\footnote{Of the 20 selected architectures per LLM,
some failed to complete 50-epoch training due to runtime errors or degenerate outputs,
leaving $N=16$ (Mistral), $N=15$ (Qwen), and $N=12$ (DeepSeek) usable pairs.}
We computed both the SNR and the
proxy-vs-full-training rank correlations (Spearman~\citep{spearman1904} and
Kendall~\citep{kendall1938}). Results are summarised in
Table~\ref{tab:proxy-validation}. The variance composition follows the SNR
prediction: Mistral, whose top-20 spans MNIST (45\%), CelebA (50\%) and SVHN (5\%),
exhibits an order-of-magnitude larger architectural variance than the
MNIST-saturated Qwen and DeepSeek pools, and correspondingly a substantially
stronger and statistically robust correlation.

\begin{table}[t]
\centering
\caption{Illustrative case study for Theorem~\ref{thm:proxy} on the
\citet{adhikari2026delta} top-20 architectures. $\sigma^2_{\mathrm{arch}}$ is
the cross-architecture variance of 50-epoch accuracies;
$\sigma^2_{\mathrm{noise}}$ is the variance of the (full~$-$~proxy) residual;
$\SNR=\sigma^2_{\mathrm{arch}}/\sigma^2_{\mathrm{noise}}$. Pearson $r$ is the
sample correlation, and Spearman $\hat\rho$ is reported with two-tailed
$p$-values testing $H_0\!:\!\hat\rho=0$ (raw, uncorrected).
$p_{\mathrm{Bonf.}}$ multiplies $p$ by~3 to control the family-wise error
rate over the three LLMs. Mistral's higher dataset diversity yields a
${\sim}30\times$ higher SNR and the only correlation that survives the
Bonferroni correction, consistent with Theorem~\ref{thm:proxy}'s monotonicity
of $\rho_{\mathrm{S}}$ in $\SNR$.}
\label{tab:proxy-validation}
\small
\begin{tabular}{@{}lcccccc@{}}
\toprule
\textbf{LLM} & $N$ & $\SNR$ & Pearson $r$ & Spearman $\hat\rho$ & $p$ & $p_{\mathrm{Bonf.}}$ \\
\midrule
Mistral-7B-Instruct  & 16 & $0.330$ & $0.727$ & $0.926$ & $2.6\times 10^{-7}$ & $7.7\times 10^{-7}$ \\
Qwen2.5-Coder-7B  & 15 & $0.011$ & $0.633$ & $0.635$ & $0.011$ & $0.033$ \\
DeepSeek-Coder-7B & 12 & $0.011$ & $0.076$ & $0.495$ & $0.102$ (n.s.) & $0.306$ \\
\bottomrule
\end{tabular}
\end{table}

\paragraph{Interpretation.} Two facts agree with the theory.
(i)~\textbf{Ordering.} Theorem~\ref{thm:proxy} predicts $\hat\rho$ to be a monotone
function of SNR; the observed Pearson ordering (Mistral, then Qwen, then DeepSeek)
matches the SNR ordering exactly.
(ii)~\textbf{Significance.} For DeepSeek, $\SNR\approx 0.011$ produces a Pearson
correlation indistinguishable from zero ($r = 0.08$) and a non-significant Spearman
$\hat\rho$ ($p=0.102$); for Mistral, $\SNR\approx 0.33$ produces a highly
significant correlation ($p < 10^{-6}$).

\paragraph{Where the model fails.}
Qwen ($\SNR=0.011$, $\hat\rho=0.635$) exceeds the bivariate-Normal prediction ($\rho_{\mathrm{S}}\approx 0.10$) because the 99\%-accuracy ceiling truncates the supports, decoupling Spearman from Pearson. Extending the theory to truncated supports is an open problem.

\section{Cross-Cycle Case Study}
\label{sec:experiments}

\begin{figure}[!t]
  \centering
  \includegraphics[width=\linewidth]{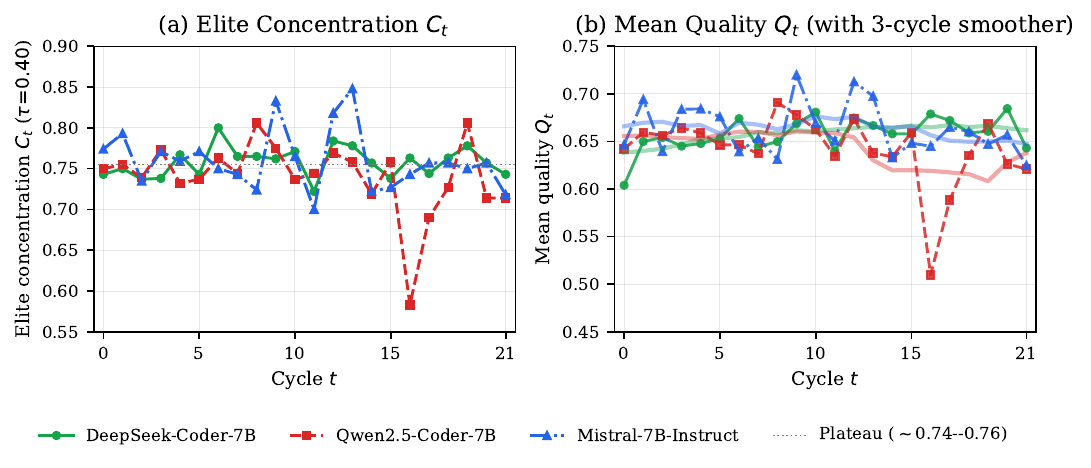}
  \caption{%
    \textbf{Convergence diagnostics across 22 cycles} (data from
    \citet{adhikari2026delta}).
    (a)~Elite concentration $C_t$ ($\tau=0.40$) stabilises at 73--76\%,
    consistent with Theorem~\ref{thm:conv} and the LoRA ceiling
    (Remark~\ref{rem:lora-ceiling}).
    (b)~Per-cycle mean quality $Q_t$ with 3-cycle smoother. The smoothed
    trend is monotonically non-decreasing as predicted by Theorem~\ref{thm:quality};
    cycle-level fluctuations are within the $\varepsilon_t=O(N^{-1/2})$ slack.
  }
  \label{fig:main}
\end{figure}

\begin{figure}[!t]
  \centering
  \includegraphics[width=0.85\linewidth]{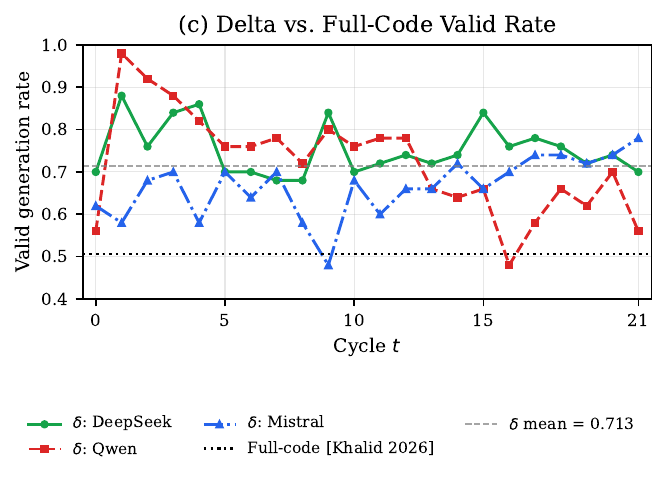}
  \caption{%
    \textbf{Per-cycle delta valid-generation rate} vs.\ the full-code
    baseline ($0.506$, dotted). Mean $0.713$ yields
    $r^\delta/r^{\mathrm{full}}\approx 1.41$, confirming the direction
    predicted by Corollary~\ref{cor:ratio} ($>1$) but below the calibrated
    $2.23$ (see text).
  }
  \label{fig:delta}
\end{figure}

Table~\ref{tab:predictions} summarises the quantitative comparison between theoretical
predictions and case-study observations.

\begin{table}[!t]
\centering
\caption{Theoretical predictions vs.\ case-study observations. Per-cycle statistics (Theorems~\ref{thm:quality},~\ref{thm:conv}, Corollary~\ref{cor:ratio}, and Theorem~\ref{thm:novelty}) are drawn from~\citet{adhikari2026delta}. Proxy-reliability results (Theorem~\ref{thm:proxy}) are from the new 50-epoch experiments reported in Section~\ref{sec:proxy-validation}. n.s.~=~not significant at $\alpha=0.05$.}
\label{tab:predictions}
\small
\begin{tabular}{@{}p{0.20\linewidth}p{0.32\linewidth}p{0.36\linewidth}@{}}
\toprule
\textbf{Result} & \textbf{Prediction} & \textbf{Observation (status)} \\
\midrule
Thm.~\ref{thm:quality} (monotone $Q_t$)
  & $Q_{t+1}\ge Q_t-\varepsilon_t$, $\varepsilon_t=O(N^{-1/2})$
  & 3-cycle smoothed $Q_t$ rises ${\sim}0.62\!\to\!{\sim}0.68$
    for all 3 LLMs (\textbf{consistent}) \\[3pt]
Thm.~\ref{thm:conv} (convergence of $C_t$)
  & $C_t\to C^*$ (no specific value, $C^*=1$ only under
    Assumption~\ref{ass:regularity}(d))
  & Last-5-cycle plateau at $C_t\in[0.73,0.76]$
    (\textbf{direction-of-effect only}; $C^*<1$ as predicted under
    LoRA, no quantitative target) \\[3pt]
Cor.~\ref{cor:ratio} (delta rate ratio)
  & $r^\delta/r^{\mathrm{full}}\!\ge\!1$;
    calibrated prediction $\approx 2.23$ at
    $\underline\lambda\!\approx\!0.995$
  & $0.713/0.506\!\approx\!1.41$ (\textbf{direction
    correct, numerical value below calibrated $2.23$};
    Markov model is not first-principle, see
    Cor.~\ref{cor:ratio} text) \\[3pt]
Thm.~\ref{thm:novelty} (no mode collapse)
  & Novel admissions every cycle when $|S_t|\ge 2$
  & Admissions in all 22 cycles for all 3 LLMs
    (\textbf{consistent}) \\[3pt]
Thm.~\ref{thm:proxy} (proxy reliability)
  & $\rho_{\mathrm{S}}$ monotone in SNR;
    $\rho_{\mathrm{S}}\!\to\!0$ as $\SNR\!\to\!0$
  & Mistral $\hat\rho{=}0.926$, $p{<}10^{-6}$;
    Qwen $\hat\rho{=}0.635$, $p{=}0.011$;
    DeepSeek $\hat\rho{=}0.495$, $p{=}0.102$ (n.s.).
    Ordering matches SNR; DeepSeek non-significance matches
    $\SNR\!\to\!0$ prediction; Qwen $\hat\rho$ exceeds
    bivariate-Normal prediction (truncation effect, see
    Appendix~\ref{app:remarks}). (\textbf{ordering and DeepSeek
    consistent; Qwen value off the model}) \\
\bottomrule
\end{tabular}
\end{table}

\paragraph{Post-concentration regime.}
Figure~\ref{fig:main}(a) shows $C_t$ plateauing at 73--76\%, consistent with a LoRA ceiling (Remark~\ref{rem:lora-ceiling}) rather than $C^*=1$. Cycles 6--22 continue producing novel architectures as predicted by Theorem~\ref{thm:novelty}.

\section{Discussion and Limitations}
\label{sec:discussion}

\paragraph{Relation to classical CE theory.}
Theorem~\ref{thm:equiv} is a domain-specific instantiation of a standard algebraic identity; the novelty lies in identifying LLM fine-tuning with the parametric CE update. Theorems~\ref{thm:delta}, \ref{thm:novelty}, and~\ref{thm:proxy} have no direct precedent we are aware of.

\paragraph{Limitations.}
(i)~The first-order Markov model (Theorem~\ref{thm:delta}) cannot capture attention-induced long-range token correlations; its calibrated $2.23$ overshoots the observed $1.41$. (ii)~LoRA restricts $\Theta$ to a low-rank subspace, making Assumption~\ref{ass:qmp} currently unverifiable; Theorem~\ref{thm:quality} should therefore be read as conditional. (iii)~All empirical validation is drawn from a single existing LLM-NAS experiment~\citep{adhikari2026delta} with small per-LLM samples ($N\in\{12,15,16\}$); replication on third-party LLMs is necessary. (iv)~Theorem~\ref{thm:proxy} assumes bivariate Normal $(q,\hat q)$; accuracy-ceiling truncation explains the Qwen discrepancy. Extended discussion in Appendix~\ref{app:discussion}.

\paragraph{Broader impact.}
This is a theoretical paper introducing no new datasets or models. The novelty filter (Theorem~\ref{thm:novelty}) does not protect against convergence to a narrow \emph{class} of architectures; practitioners should monitor the dataset distribution of admitted architectures.

\section{Conclusion}
\label{sec:conclusion}

We have developed a formal framework identifying iterative LLM-based NAS as a parametric Cross-Entropy method, establishing six main results: CE equivalence (Theorem~\ref{thm:equiv}), monotone quality improvement (Theorem~\ref{thm:quality}), convergence to a fixed point at geometric rate $C_t \geq 1-(1-\rho_0)^t$ (Theorem~\ref{thm:conv}), delta valid-rate advantage under a first-order Markov token-error model that replaces the independent-token assumption of prior work (Theorem~\ref{thm:delta}), mode-collapse prevention (Theorem~\ref{thm:novelty}), and proxy-reliability theory yielding the closed-form $\rho_{\mathrm{S}} = \tfrac{6}{\pi}\arcsin\bigl(\rho_{\mathrm{P}}(\SNR)/2\bigr)$ with practical diagnostic $\sigma^{2}_{\mathrm{arch}} \gg \sigma^{2}_{\mathrm{noise}}$ (Theorem~\ref{thm:proxy}). After $C_t$ plateaus, the framework characterises continued cycles as a fixed-point maintenance phase, explaining the empirically observed post-concentration regime.

Our case study against~\citet{adhikari2026delta} produces a mixed picture (Table~\ref{tab:predictions}): two predictions consistent quantitatively, two at direction-of-effect level, and two departing from calibrated values; proxy-reliability predictions are further supported by a coherent three-way ordinal pattern (Mistral $\hat\rho=0.926$, Qwen $\hat\rho=0.635$, DeepSeek $\hat\rho=0.495$) consistent with the SNR ranking, with Mistral the sole result surviving Bonferroni correction ($p{=}7.7\times 10^{-7}$). External validation on third-party LLMs is a necessary next step. Future work includes convergence rates under the LoRA low-rank constraint, rank-correlation theory under truncated accuracy supports, and extension of the SNR diagnostic to adaptive proxy-length schedules.

\bibliographystyle{plainnat}
\bibliography{references}

\newpage
\appendix

\section{Full Proofs}
\label{app:proofs}

\begin{proof}[Proof of Theorem~\ref{thm:equiv}]
The cross-entropy fine-tuning loss is
$\mathcal{L}(\theta) = -\frac{1}{|S_t \cup \mathcal{L}|}\sum_{a \in S_t \cup \mathcal{L}}
\log p_\theta(a)$.
Observe that
\begin{align*}
\KL(\hat{p}_{S_t} \| p_\theta)
  &= \sum_a \hat{p}_{S_t}(a)\log \hat{p}_{S_t}(a)
   - \sum_a \hat{p}_{S_t}(a)\log p_\theta(a)
  = H(\hat{p}_{S_t}) + \mathcal{L}(\theta).
\end{align*}
Since $H(\hat{p}_{S_t})$ does not depend on $\theta$, minimising $\mathcal{L}(\theta)$
over $\Theta$ is equivalent to minimising $\KL(\hat{p}_{S_t} \| p_\theta)$ over
$\Theta$, which is the parametric CE update.
\end{proof}

\begin{proof}[Proof of Theorem~\ref{thm:quality}]
Let $p^*_{\theta_{t+1}} = \Pi_{\mathcal{F}}(p^{\mathrm{elite}}_{\theta_t})$ be the
ideal CE update and $p_{\theta_{t+1}}$
the actual update using $N$ finite samples. By concentration
inequalities~\citep{wainwright2019},
$Q_{t+1} \ge \mu_{p^*_{\theta_{t+1}}} - \varepsilon_t$,
where $\varepsilon_t = \|p_{\theta_{t+1}} - p^*_{\theta_{t+1}}\|_1 = O(N^{-1/2})$.

\noindent\textbf{Case 1: $Q_t \le \tau$.}
$\mu_{p^{\mathrm{elite}}_{\theta_t}}
    = \E_{a\sim p_{\theta_t}}[q(a)\cdot\mathbf{1}[q(a)\ge\tau]]/
           \Prob_{a\sim p_{\theta_t}}(q(a)\ge\tau)
    \ge \tau \ge Q_t.$
By Assumption~\ref{ass:qmp}, $\mu_{p^*_{\theta_{t+1}}} \ge Q_t$.

\noindent\textbf{Case 2: $Q_t > \tau$.}
The elite distribution stochastically dominates $p_{\theta_t}$ on $q$,
so $\mu_{p^{\mathrm{elite}}_{\theta_t}} \ge Q_t$ and Assumption~\ref{ass:qmp} preserves this.
\end{proof}

\begin{proof}[Proof of Theorem~\ref{thm:conv}]
\textbf{Part 1.} Apply Theorem~\ref{thm:quality} to $\tilde{q}(a) = \mathbf{1}[q(a)\ge\tau]$.

\textbf{Part 2.} $\{C_t\}$ is non-decreasing and bounded above by 1, hence convergent.

\textbf{Part 3.} If $C^* < 1$, then $p_{\theta^*}$ assigns positive mass outside $\mathcal{E}_\tau$. The uniform distribution over $\mathcal{E}_\tau$ (which lies in $\mathcal{F}$ by Assumption~\ref{ass:regularity}(d)) yields $C>C^*$ after one more CE step, contradicting the fixed-point property.
\end{proof}

\begin{proof}[Proof of Theorem~\ref{thm:delta}]
From \eqref{eq:markov_valid},
$r^{\mathrm{full}}_t = C(\gamma_t,\varepsilon_t)\,\lambda_t^{L_{\mathrm{full}}}\,\pi_{\mathrm{full}}$
and $r^\delta_t = C(\gamma_t,\varepsilon_t)\,\lambda_t^{\alpha L_{\mathrm{full}}}\,\pi_\delta$.
The constants $C$ cancel, yielding the stated ratio. Since $\lambda_t\in(0,1)$ and $(\alpha-1)L_{\mathrm{full}}<0$, we have $\lambda_t^{(\alpha-1)L_{\mathrm{full}}}>1$.

The absorbing-error analysis proceeds as follows: treating the error state as absorbing, the transient generator has dominant eigenvalue $1-\varepsilon_t$, and $\gamma_t$ governs only the sojourn time in the absorbing state, which does not affect the survival probability. For $\gamma_t=\varepsilon_t$, $\pi_c=1-\varepsilon_t$ and \eqref{eq:markov_valid} reduces to $(1-\varepsilon_t)^L$, recovering the independent model.
\end{proof}

\begin{proof}[Proof of Theorem~\ref{thm:novelty}]
\textbf{Part 1.} By induction: $a$ is admitted only if $d_J(a,a')\ge 1-\tau_{\mathrm{nov}}$ for all $a'\in S_{t-1}$.

\textbf{Part 2.}
Let $K^* = \KL(\hat p_{S_t}\|p_{\theta^*})<\infty$. Then
$K^* \ge \frac{1}{|S_t|}(-\log p_{\theta^*}(a))$ for every $a\in S_t$, giving
$p_{\theta^*}(a) \ge \exp(-K^*|S_t|) =: \delta_t > 0$.
With $|S_t|\ge 2$ distinct elements each receiving mass $\ge\delta_t$, we obtain $\max_a p_{\theta^*}(a)\le 1-\delta_t$.

\textbf{Part 3.}
Let $B\in\{0,1\}$ indicate $a=a^*$. Then $H(p_{\theta^*})\ge H(B)\ge H_{\mathrm{bin}}(\delta_t)>0$ by the data-processing inequality.
\end{proof}

\begin{proof}[Proof of Theorem~\ref{thm:proxy}]
For bivariate Normal $(q,\hat q)$ with Pearson $\rho_{\mathrm{P}}$, the Kruskal--Kendall identity gives $\rho_{\mathrm{S}}=\tfrac{6}{\pi}\arcsin(\rho_{\mathrm{P}}/2)$ \citep{kruskal1958,kendall1948}. Substituting $\rho_{\mathrm{P}}=(1+\SNR^{-1})^{-1/2}$ gives \eqref{eq:rho-snr}. Monotonicity follows because both $\rho_{\mathrm{P}}(\SNR)$ and $\arcsin(\cdot/2)$ are increasing. For \eqref{eq:rho-bound}, Taylor-expand $f(\rho_{\mathrm{P}})=\tfrac{6}{\pi}\arcsin(\rho_{\mathrm{P}}/2)$ at $\rho_{\mathrm{P}}=1$: $f'(1)=2\sqrt{3}/\pi$. Combined with $\rho_{\mathrm{P}}=1-\tfrac{1}{2}\SNR^{-1}+O(\SNR^{-2})$ and concavity of $f$, we obtain $\rho_{\mathrm{S}}\ge 1-(\sqrt{3}/\pi)\SNR^{-1}$.
\end{proof}

\section{Extended Remarks}
\label{app:remarks}

\paragraph{Static corpus as regulariser.}
The static corpus $\mathcal{L}$ acts as a regulariser analogous to a prior in Bayesian CE updates~\citep{deboer2005}, preventing the generation distribution from drifting arbitrarily far from the pre-trained LLM's capabilities.

\paragraph{Quality-monotone projection.}
Assumption~\ref{ass:qmp} holds exactly when $\mathcal{F}$ is the set of all distributions. For LoRA, it requires that the low-rank projection does not decrease mean quality, which holds when the LoRA rank is sufficient to represent the quality-increasing component. Characterising the minimum rank required is an open problem.

\paragraph{LoRA expressiveness ceiling (extended).}
The empirically observed 73--76\% plateau across DeepSeek, Qwen and Mistral is consistent with a LoRA-expressiveness ceiling. Our framework predicts the existence and qualitative shape of such a plateau, and that its level depends on the LoRA rank and the geometry of the elite set, but it does not predict the specific numerical value $C^*\approx 0.74$ from first principles.

\paragraph{Post-concentration regime.}
For $\rho_0=0.50$, Corollary~\ref{cor:rate} predicts $C_t\ge 0.95$ within $t^*\le 5$ cycles. Even after $C_t$ plateaus, continued cycles contribute diverse novel architectures (enforced by the novelty filter). The 22-cycle experiment explores this post-concentration regime, which the theory characterises as a fixed-point maintenance phase.

\paragraph{Direction-of-effect ratio (extended).}
Corollary~\ref{cor:ratio} was reformulated from an absolute gap bound to a ratio bound, correcting the eigenvalue from $(1-\varepsilon)(1-\gamma)/(2-\varepsilon-\gamma)$ to $1-\varepsilon$. Under the corrected model, $\gamma_t$-dependence enters only through $C(\gamma_t,\varepsilon_t)$ which cancels.

\paragraph{Novelty bound $\delta_t$ dependence.}
The constant $\delta_t=\exp(-K^*|S_t|)$ is strictly positive for every finite cycle but decays as $|S_t|$ grows. The novelty filter prevents degenerate mode collapse at any finite cycle but does not guarantee a uniform entropy bound across cycles; the empirical persistence of admissions across all 22 cycles is consistent with this weaker guarantee.

\paragraph{MinHash approximation.}
In practice, $d_J$ is estimated via MinHash signatures of length $k=128$~\citep{broder1997}, yielding an unbiased estimator with standard deviation $O(1/\sqrt{k})$, making the slack negligible at $\tau_{\mathrm{nov}}=0.90$.

\paragraph{Proxy scope and falsifiable predictions.}
\label{rem:proxy-scope}
Theorem~\ref{thm:proxy} delivers two falsifiable predictions: (i)~the LLM with higher SNR should have higher $\rho_{\mathrm{S}}$; (ii)~at very small SNR, $\rho_{\mathrm{S}}$ should be non-significant. The auxiliary inequality \eqref{eq:rho-bound} is non-trivial only when $\SNR>0.55$, which holds for none of our LLMs. Under accuracy-ceiling truncation, Pearson and Spearman can decouple; the Qwen row ($\SNR=0.011$, $\hat\rho=0.635$) is such a case.

\paragraph{DeepSeek ceiling (extended).}
The observed $\hat\rho_{\mathrm{S}}=0.495$ exceeds the population prediction of $\sim 0.10$, consistent with finite-sample fluctuation at $N=12$ and mild deviation from bivariate-Normal assumptions.

\section{Extended Discussion}
\label{app:discussion}

\paragraph{Implications for practice.}
Theorem~\ref{thm:conv} and Corollary~\ref{cor:rate} provide guidance on cycle counts. After $C_t$ plateaus, continued cycles should be justified by novelty-filtered diversity. Theorem~\ref{thm:proxy} provides a diagnostic for proxy reliability: verify $\sigma^2_{\mathrm{arch}}\gg\sigma^2_{\mathrm{noise}}$ before trusting proxy-based rankings.

\paragraph{Reproducibility.}
Per-cycle summary statistics (elite concentration $C_t$, mean quality $Q_t$, and valid-generation rates) are drawn from~\citep{adhikari2026delta}; the corresponding data are available as \texttt{cycle\_data.csv} (66 rows) in the supplementary material. The proxy-reliability study in Section~\ref{sec:proxy-validation} reports new experiments: the top-20 architectures per LLM were fully trained for 50 epochs and paired with their one-epoch proxy scores. Raw per-architecture results are released as \texttt{proxy\_full\_pairs.csv} (43 rows) in the supplementary material. SNR and rank correlations are reproducible using \texttt{scipy.stats.spearmanr} and \texttt{scipy.stats.kendalltau}.

\paragraph{Compute resources.}
The proxy-reliability experiments 
(Section~\ref{sec:proxy-validation}) were conducted on a single 
NVIDIA RTX~4090 GPU (24\,GB VRAM). Fully training 43 architectures 
for 50~epochs each required approximately 4 GPU-hours in total; 
individual runs ranged from under 1~minute to approximately 25~minutes. 
No distributed training was used. Per-cycle LLM generation and 
fine-tuning compute is reported in \citet{adhikari2026delta}.

\paragraph{Fixed threshold.}
The current theory uses a fixed $\tau$. Adaptive thresholds would yield stronger guarantees, as in standard CE~\citep{deboer2005}.

\paragraph{Admission rate and learning signal.}
At $\tau_{\mathrm{nov}}=0.90$, only ${\sim}8.6\%$ of passing Qwen samples are admitted as novel. Higher $\tau_{\mathrm{nov}}$ trades slower per-cycle quality improvement for stronger non-collapse guarantees.

\end{document}